\newcommand{\qvalue}{Q}
 \newcommand{\vvalue}{V}
 \newcommand{\reward}{r}
\def \alg {\mathtt{Alg}}
\newcommand{\la}{\langle}
\newcommand{\ra}{\rangle}
\def \algname {\text{$\cF$-UPAC-OFUL }}
\definecolor{LightCyan}{rgb}{0.8, 0.9, 1}
\newcommand*{\rom}[1]{\expandafter\@slowromancap\romannumeral #1@}
\title{\huge Uniform-PAC Guarantees for Model-Based RL with Bounded Eluder Dimension}
\author
{
    Yue Wu\thanks{Department of Computer Science, University of California, Los Angeles, CA 90095, USA; e-mail: {\tt ywu@cs.ucla.edu}} 
    ~~~and~~~
	Jiafan He\thanks{Department of Computer Science, University of California, Los Angeles, CA 90095, USA; e-mail: {\tt jiafanhe19@ucla.edu}} 
	~~~and~~~
	Quanquan Gu\thanks{Department of Computer Science, University of California, Los Angeles, CA 90095, USA; e-mail: {\tt qgu@cs.ucla.edu}}
}
\date{}
\begin{document}
\maketitle

\begin{abstract}
    Recently, there has been remarkable progress in reinforcement learning (RL) with general function approximation. However, all these works only provide regret or sample complexity guarantees. It is still an open question if one can achieve stronger performance guarantees, i.e., the uniform probably approximate correctness (Uniform-PAC) guarantee that can imply both a sub-linear regret bound and a polynomial sample complexity for any target learning accuracy.  We study this problem by proposing algorithms for both nonlinear bandits and model-based episodic RL using the general function class with a bounded eluder dimension. The key idea of the proposed algorithms is to assign each action to different levels according to its width with respect to the confidence set. The achieved uniform-PAC sample complexity is tight in the sense that it matches the state-of-the-art regret bounds or sample complexity guarantees when reduced to the linear case. To the best of our knowledge, this is the first work for uniform-PAC guarantees on bandit and RL that goes beyond linear cases.
\end{abstract}

\section{Introduction}

Designing efficient algorithms to learn and plan in the sequential decision-making environment modeled by a Markov decision process (MDP) is one of the main tasks in reinforcement learning (RL). However, traditional tabular RL algorithms suffer from the curse-of-dimensionality due to the large size of the state and action spaces in practice. To enable learning in high-dimensional state and action spaces, using a predefined function class to approximate the underlying transition dynamic or the value function is a common approach. Most existing works for RL with function approximation focus on simple linear function classes such as the linear mixture MDP \citep{modi2020sample,ayoub2020model,zhou2020provably}, which can replace the size of the state and action spaces with the dimension of the linear function class. However, these assumptions are often too restrictive to hold in practice. Recently, a line of works \citep{russo2013eluder,du2021bilinear,jin2021bellman} emerged that studies RL with general function approximation, introducing new complexity measures for the general function class and proposing new algorithms with regret bounds or PAC guarantees in terms of the complexity of the general function class.
All existing results of RL with a general function class are limited to either regret bounds or PAC sample complexity, both of which cannot ensure convergence to the optimal policy up to arbitrary accuracy. To address this, \citet{dann2017unifying} proposed the strongest possible performance measure, the uniform-PAC guarantee, which provides upper bounds on the number of $\epsilon$-suboptimal episodes for any accuracy parameter $\epsilon>0$ uniformly. They also proposed the UBEV algorithm with $\tilde O(SAH^4/\epsilon^2)$ Uniform-PAC sample complexity, which guarantees convergence to the optimal policy for any $\epsilon>0$. Recently, \citet{he2021uniform} proposed the FLUTE algorithm for RL with linear function approximation, which has an $\tilde O(d^3H^5/\epsilon^2)$ uniform-PAC sample complexity, where $d$ is the dimension of the linear function class.

In this paper, we propose new algorithms for both the bandit problem and model-based RL problem with a general function class, focusing on a model-based RL problem called linear mixture MDPs \citep{ayoub2020model}. Our approach uses the eluder dimension \citet{russo2013eluder} as a complexity measure for the general function class $\mathcal{F}$, which generalizes the linear independence relation in the standard vector space to capture the nonlinear independence in the function space approximately. The core of our algorithms is to assign each action to different levels based on its width with respect to the confidence set. For function classes with a bounded eluder dimension, we prove that both algorithms have uniform-PAC guarantees and near-optimal sample complexity bounds. Our key contributions are highlighted below. 
\begin{itemize}[leftmargin = *] 
 \item In the nonlinear bandit problem, where the reward function $f$ is from a known class $\mathcal{F}$ with bounded eluder dimension, we propose the $\cF$-UPAC-OFUL algorithm that achieves an $\tilde O(d_K d_E/\epsilon^2)$ uniform-PAC guarantee. Here $d_K$ relates to the metric entropy (log-covering number) of $\mathcal{F}$, $d_E$ is its eluder dimension, and $\epsilon$ is the accuracy parameter. Our result implies a $\tilde O(\sqrt{d_K d_E K})$ regret guarantee in the first $T$ rounds, matching the result in \citep{russo2013eluder} up to a logarithmic factor. This is the first uniform-PAC guarantee for nonlinear bandits.
\item We also consider a model-based episodic RL problem where the transition probability $P^*$ belongs to a known family $\cP$. We propose the $\cF$-UPAC-VTR algorithm and obtain a $\tilde O(H^3 d_K d_E/\epsilon^2)$ uniform-PAC sample complexity, where $H$ is the horizon length, $d_K$ relates to the metric entropy and $d_E$ is the eluder dimension for the value function class $\mathcal{F}$ induced by the transition probability family $\cP$. This result also implies a $\tilde O(\sqrt{d_K d_E H^3 K})$ regret bound in the first $K$ episodes, matching the result of the UCRL-VTR algorithm in \citep{ayoub2020model} up to a logarithmic factor.

\end{itemize}

For the ease of comparison, we list the results of our algorithms and the most related algorithms in Tables~\ref{table:bandit} and~\ref{table:mdp}.

\newcolumntype{g}{>{\columncolor{LightCyan}}c}
\begin{table}[ht]
\caption{Comparison of algorithms for bandits with linear or general reward function in terms of sample complexity and regret. Note that $d = \tilde{\Theta}(d_K) = \tilde{\Theta}(d_{E})$ in the linear case. ``--" means the corresponding result is not available or not applicable for uniform-PAC sample complexity. Our results are tight given the nearly matching upper and lower bounds. }\label{table:bandit}
\centering
\resizebox{\textwidth}{!}{
\begin{tabular}{cgggg}
\toprule
\rowcolor{white}
 Algorithm & Sample Complexity & Regret & Uniform-PAC & Reward Function
 \\
\midrule
\rowcolor{white}
OFUL & & &  & \\

\rowcolor{white} \small{\citep{abbasi2011improved}} & \multirow{-2}{*}{--}  &  \multirow{-2}{*}{$\tilde  O(d \sqrt{K})$} & \multirow{-2}{*}{\ding{56}}   & \multirow{-2}{*}{Linear }\\

\rowcolor{white}
UPAC-OFUL & & & & \\

\rowcolor{white} \small{\citep{he2021uniform}} & \multirow{-2}{*}{$\tilde  O(d^2/\epsilon^2)$} &  \multirow{-2}{*}{$\tilde  O(d \sqrt{K})$} & \multirow{-2}{*}{\ding{51}}   & \multirow{-2}{*}{Linear }\\
 
\rowcolor{white}
Linear UCB & & & & \\

\rowcolor{white} \small{\citep{russo2013eluder}} & \multirow{-2}{*}{--}  &  \multirow{-2}{*}{$\tilde  O(\sqrt{d_K d_E K})$} & \multirow{-2}{*}{\ding{56}}  & \multirow{-2}{*}{General }\\

\algname & & & & \\

 \small{Our work} & \multirow{-2}{*}{$\tilde  O(d_K d_E/\epsilon^2)$}  &  \multirow{-2}{*}{$\tilde  O(\sqrt{d_K d_E K})$} &  \multirow{-2}{*}{\ding{51}}  & \multirow{-2}{*}{General }\\

 \midrule
 \rowcolor{white}
 Lower bound & & & & \\
\rowcolor{white}
 \small{\citep{lattimore2018bandit}} & \multirow{-2}{*}{--}  &  \multirow{-2}{*}{$\tilde  \Omega(d\sqrt{K})$}  & \multirow{-2}{*}{--}  & \multirow{-2}{*}{Linear }\\
\bottomrule
\end{tabular}
}
\end{table}

\begin{table*}[ht]
\caption{Comparison of algorithms for model-based RL with linear or general function approximation in terms of sample complexity and regret. Note that $d = \tilde{\Theta}(d_K) = \tilde{\Theta}(d_{E})$ in the linear case. }\label{table:mdp}
\centering
\resizebox{\textwidth}{!}{
\begin{tabular}{cgggg}
\toprule
\rowcolor{white}
 Algorithm & Sample Complexity & Regret & Uniform-PAC & Function Approximation
 \\
\midrule
\rowcolor{white}
UCRL-VTR+ & & & & \\

\rowcolor{white} \small{\citep{zhou2020nearly}} & \multirow{-2}{*}{--}  &  \multirow{-2}{*}{$\tilde  O(dH \sqrt{K})$} &  \multirow{-2}{*}{\ding{56}}  & \multirow{-2}{*}{Linear }\\

\rowcolor{white}
UCRL-VTR & & & & \\

\rowcolor{white} \small{\citep{ayoub2020model}} & \multirow{-2}{*}{--}  &  \multirow{-2}{*}{$\tilde  O(\sqrt{H^3d_K d_E K})$} &  \multirow{-2}{*}{\ding{56}}  & \multirow{-2}{*}{General }\\

$\mathcal{F}$-UCRL-VTR & & & & \\

 \small{Our work} & \multirow{-2}{*}{$\tilde  O(H^3d_K d_E/\epsilon^2)$}  &  \multirow{-2}{*}{$\tilde  O(\sqrt{H^3d_K d_E K})$} &  \multirow{-2}{*}{\ding{51}}  & \multirow{-2}{*}{General }\\

 \midrule
 \rowcolor{white}
 Lower bound & & & & \\
\rowcolor{white}
 \small{\citep{zhou2020nearly}} & \multirow{-2}{*}{--}  &  \multirow{-2}{*}{$\tilde  \Omega(dH\sqrt{K})$} &  \multirow{-2}{*}{--}  & \multirow{-2}{*}{Linear }\\
\bottomrule
\end{tabular}
}
\end{table*}

\section{Related Work}
\subsection{RL with Linear Function Approximation}

There is a body of research on learning Markov Decision Processes (MDPs) under the assumption of linear function approximation, which can be divided into model-free and model-based approaches. For model-free algorithms, \citet{jin2019provably} studied the linear MDP model, where the reward function $r(s,a)$ and transition probability function $\PP(s'|s,a)$ are linear with respect to a given feature map $\bphi(s,a)$. The LSVI-UCB algorithm proposed by \citet{jin2019provably} achieved an $O(\sqrt{d^3H^3K})$ regret bound. \citet{zanette2020learning} extended the linear MDP assumption to the low inherent Bellman error assumption, where the Bellman backup can be approximated by a linear function class.

For model-based algorithms, \citet{jia2020model} investigated linear mixture MDPs \citep{modi2020sample}, where the transition probability function $\PP(s'|s,a)$ is linear with respect to a given feature mapping $\bphi(s'|s,a)$. They proposed the UCRL-VTR algorithm, which has a regret guarantee of $\tilde O(d\sqrt{H^3K})$. \citet{zhou2020nearly} improved the regret guarantee to $\tilde O(dH\sqrt{K})$ by introducing a Bernstein-type bonus in the linear mixture model.

Linear bandit problems, as a special case of RL with linear function approximation, have also been extensively studied. For infinite-arm linear bandit problems, \citet{dani2008stochastic} obtained an $O(d\sqrt{K\log^3 K})$ regret guarantee with the Confidence Ball algorithm. \citet{abbasi2011improved} proposed the OFUL algorithm, which improved the result to $O(d\sqrt{K\log^2 K})$.


\subsection{RL with General Function Approximation}
Recently, a line of research has emerged in RL with general function approximation to relax the linear function assumption \citep{russo2013eluder,jiang2017contextual,sun2019model,dong2020root,yang2020function,wang2020reinforcement,ayoub2020model,jin2021bellman,du2021bilinear}. \citet{jiang2017contextual} introduced the Bellman rank, a complexity measure that depends on the function class $\mathcal{F}$ and the roll-in policy, and proposed the OLIVE algorithm for models with low Bellman rank, which has a polynomial PAC-bound guarantee. The AVE algorithm by \citet{dong2020root} was proposed for learning low-Bellman-rank MDPs and obtained the first $O(\sqrt{K})$-regret guarantee. \citet{sun2019model} extended the Bellman rank to the model-based setting and proposed a complexity measure called "witness rank", which is always no larger than the Bellman rank and obtained a polynomial PAC bound in terms of the "witness rank".

In contrast, the eluder dimension \citep{russo2013eluder} measures the complexity of a function class $\mathcal{F}$ from a different perspective, indicating how effectively the underlying function $f\in \mathcal{F}$ can be deduced from the current information. UCB-type and Thompson sampling-type algorithms for bandit problems were proposed by \citet{russo2013eluder}, both obtaining a $\sqrt{K}$-regret guarantee for bandit models with low eluder dimension. \citet{osband2014model} extended this notion to reinforcement learning models and proposed a post-sampling algorithm, while \citet{wang2020reinforcement} and \citet{ayoub2020model} extended it to more general model-free and model-based reinforcement learning problems. Under the assumption that the value function or the transition probability belongs to a function class $\mathcal{F}$ with low eluder dimension, both the model-free algorithm $\mathcal{F}$-LSVI \citep{wang2020reinforcement} and the model-based algorithm UCRL-VTR achieve $O(\sqrt{K})$-regret guarantees.

Recently, \citet{jin2021bellman} extended the eluder dimension to the Bellman eluder dimension, which considers the general function class and possible MDP structures simultaneously. The low Bellman eluder dimension covers both the low eluder dimension and the low Bellman rank, and the GOLF algorithm by \citet{jin2021bellman} achieves both $O(\sqrt{K})$-regret and polynomial PAC-bound guarantees. However, the Bellman eluder dimension does not cover the model-based RL problem. Concurrently, \citet{du2021bilinear} proposed the Bilinear class with bounded effective dimension, which contains many MDP models such as low Bellman rank and low witness rank, and the BiLin-UCB algorithm with polynomial PAC bound. Notably, neither the Bilinear class nor the low Bellman eluder dimension class contains each other.

All these works with general function approximation concern either a PAC sample complexity bound or a regret bound. This motivates us to provide a stronger guarantee on general function approximation, and we begin with the low eluder dimension as a first attempt.

\section{Preliminaries} \label{sec:prelim}
\paragraph{Notation} We use lower case letters to denote scalars,  lower and upper case bold letters to denote vectors and matrices. We use $\| \cdot \|_{\infty}$ to indicate the infinity norm. 
For a probability measure $P(s)$ and a function $V(s)$ on the space $\cS$, we use $\la P, V \ra$ or $\la P(\cdot), V(\cdot) \ra$ to denote the integral $\int_{\cS} V(s) dP(s)$.
We also use the standard $O$ and $\Omega$ notations. We say $a_n = O(b_n)$ if and only if $\exists C > 0, N > 0, \forall n > N, a_n \le C b_n$; $a_n = \Omega(b_n)$ if $a_n \ge C b_n$. The notation $\tilde{O}$ is used to hide logarithmic factors. 

\subsection{Episodic Markov Decision Process} \label{subsec:prelim-MDP} 
In this work, we consider the homogeneous, episodic Markov Decision Process (MDP). Each episodic MDP is denoted by a tuple $M\big(\cS, \cA, H, r(
\cdot,
\cdot), P^*(\cdot|\cdot,
\cdot)\big)$. Here, $\cS$ is the state space, $\cA$ is the finite action space,  $H$ is the horizon length (i.e., length of each episode), $r: \cS \times \cA \rightarrow [0,1]$ is the reward function and $P^*(s'|s,a): \cS \times \cA \rightarrow \Delta^{|\cS|}$ is the transition probability function that denotes the probability for state $s$ to transit to the next state $s'$ given action $a$. 
A policy $\pi_h(\cdot|s) : \cS \times [H] \rightarrow \Delta^{\cA}$ is a function which maps a state $s$ and the current step $h$ to a distribution over the action space $\cA$. In addition, for any policy $\pi$ and step $h\in [H]$, we define the action-value function $\qvalue_h^{\pi}(s,a)$ and value function $\vvalue_h^{\pi}(s)$ as the expected total reward from step $h$ by following the policy $\pi$:
\begin{align*}
    V^{\pi}_h(s)
     &:=
    \EE_{\pi}
    \big[ 
    {\sum_{h'=h}^{H}}
    r(s_{h'}, a_{h'})
    | s_h = s
    \big], \\
    Q^{\pi}_h(s,a) 
     &:= 
    r(s,a) 
    + \EE_{\pi} \big[{\sum_{h'=h+1}^H} r(s_{h'}, a_{h'})\big| s_h=s,a_h=a\big],
\end{align*}
where $s_{h'+1}\sim P^*(\cdot|s_{h'},a_{h'})$ and $a_{h'} \sim \pi_h(s_{h'})$. With this definition, the value function $\vvalue_h^{\pi}(s)$ and  $\qvalue_h^{\pi}(s,a)$ are bounded in $[0,H]$.

We define the optimal value function $V_h^*$ and $Q_h^*$ as $V_h^*(s) = \sup_{\pi}\vvalue_h^{\pi}(s)$ and $\qvalue_h^*(s,a) = \sup_{\pi}\qvalue_h^{\pi}(s,a)$.  For each step $h\in[H]$ and policy $\pi$, we have the following Bellman equation and Bellman optimality equation:
\begin{align}
    \qvalue_h^{\pi}(s,a) & = \reward(s,a) + \la P^*(\cdot |s,a),  \vvalue_{h+1}^{\pi}(\cdot) \ra , \notag \\
    \qvalue_h^{*}(s,a)  & = \reward(s,a) + \la P^*(\cdot |s,a), \vvalue_{h+1}^{*}(\cdot) \ra,\label{eq:bellman}
\end{align}
where $\vvalue^{\pi}_{H+1}(s')=\vvalue^{*}_{H+1}(s')=0$.

We study the online RL problem where the learning agent is given with $s\in \cS, a\in \cA, h\in [H]$ and reward $r$ but does not know the transition probability $P^* \in \cP$, where $\cP$ is a class of possible transition probabilities. 

Generally speaking, the goal of an RL agent is to maximize the expected total reward over all $K$ episodes. When $P$ is known, the optimal policy is also known and computable via dynamic programming. 
Denote $\pi_k$ as the policy the agent follows at episode $k$, the suboptimality gap incurred at episode $k $ is defined as the difference between the optimal value function and value function for policy $\pi_k$:
    $\Delta_{k}
     :=
    V^*_{1}(s_{k,1})
    -
    V_1^{\pi_k}(s_{k,1})$.
With this notation, the pseudo-regret in first $K$ episodes is
\begin{align*}
{
    \text{Regret}(K)=\sum_{k=1}^K \Delta_{k}=\sum_{k=1}^K V^*_{1}(s_{k,1})
    -
    V_1^{\pi_k}(s_{k,1}).}
\end{align*}
Most works in the literature focus on establishing an upper bound on $\text{Regret}(K)$. In the next subsection, we introduce the notion of a stronger guarantee.

\subsection{The Uniform-PAC Guarantee}
We say an algorithm is $(\epsilon, \delta)$-PAC, if for any $\epsilon, \delta \in (0,1)$, there exists a function $N(\epsilon, \delta)$ that is polynomial in $\epsilon^{-1}$ and $\log(\delta^{-1})$, such that
\begin{align*}
{
    \PP\big(  \sum_{k=1}^\infty\ind \{\Delta_k
    >\epsilon \} \le  N(\epsilon, \delta)\big) \ge 1 - \delta.
}
\end{align*}
Here $N(\epsilon, \delta)$ is the sample complexity function.
However, both the regret guarantee and the PAC guarantee have their limitations. For example, algorithms with sub-linear regret $o(K)$ in the first $K$ episodes may suffer $\epsilon$-suboptimality infinite times and fail to learn the optimal policy. For the PAC bound, it only controls the number of times that $\Delta_k>\epsilon$. And the algorithm may incur a smaller $\epsilon'$-suboptimality gap for infinite times for $\epsilon' < \epsilon$. 

To overcome these limitations, \citet{dann2017unifying} introduced a stronger notion of guarantee called uniform-PAC, which provides PAC-guarantees for all accuracy parameter $\epsilon$ uniformly. More specifically, we say an algorithm is uniform-PAC for some $\delta \in (0,1)$, if there exists a function $N(\epsilon, \delta)$ polynomial in $\epsilon^{-1}$ and $\log(\delta^{-1})$, such that
\begin{align*}
{
    \PP\big(\forall \epsilon >0,\ \sum_{k=1}^\infty\ind\{\Delta_k
    >\epsilon \} \leq N(\epsilon, \delta)\big) \geq 1- \delta.
    }
\end{align*}
\citet{dann2017unifying} showed that the uniform-PAC guarantee is \textit{strictly stronger} than both the regret bound and the PAC-guarantee:
\begin{theorem}[Theorem 3,  \citet{dann2017unifying}]\label{theorem: pac-transfer}
If an algorithm $\alg$ is $N(\epsilon,\delta)$-uniform-PAC  with sample complexity $N(\epsilon,\delta)=\tilde O(C_1/\epsilon+ C_2/\epsilon^2)$, where $C_1,C_2$ are constant parameter for the algorithm $\alg$ and only depend on $\text{poly}\big(S,A,H,\log(1/\delta)\big)$. Then, the algorithm $\alg$ has the following results:
\begin{itemize}[leftmargin = *]
    \item 1: $\alg$ will converge to optimal policies with high probability at least $1-\delta$: $\PP\big(\lim_{k \rightarrow +\infty} \Delta_k=0\big)\ge 1-\delta$ 
    \item 2:  Algoroithm $\alg$ is also $(\epsilon,\delta)$-PAC with the same sample complexity $\tilde O(C_1/\epsilon+ C_2/\epsilon^2)$ for all $\epsilon>0$.
    \item 3: With probability at least $1-\delta$, for each $K\in \NN$, the regret for $\alg$ in the first $K$ episodes is upper bounded by $\tilde O\big(\sqrt{C_2K}+\max\{C_1,C_2\}\big)$.
\end{itemize}
\end{theorem}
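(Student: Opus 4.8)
The plan is to condition on the single favorable event supplied by the uniform-PAC property and derive all three conclusions from the pointwise counting bound it provides. Let $\mathcal{E}$ denote the event $\{\forall \epsilon>0,\ \sum_{k=1}^\infty \ind\{\Delta_k>\epsilon\}\le N(\epsilon,\delta)\}$, which by hypothesis satisfies $\PP(\mathcal{E})\ge 1-\delta$. Every statement below is established on $\mathcal{E}$, so each holds with probability at least $1-\delta$. The first two claims are essentially immediate, while the regret bound carries all of the real work.

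For claim 1 (convergence) I would argue directly from finiteness. On $\mathcal{E}$, for each fixed $\epsilon>0$ the sum $\sum_{k=1}^\infty \ind\{\Delta_k>\epsilon\}$ is bounded by $N(\epsilon,\delta)<\infty$, so only finitely many episodes satisfy $\Delta_k>\epsilon$; hence $\limsup_{k\to\infty}\Delta_k\le\epsilon$. Since $\epsilon>0$ is arbitrary and $\Delta_k\ge 0$, this forces $\lim_{k\to\infty}\Delta_k=0$. For claim 2 (fixed-$\epsilon$ PAC), the uniform statement quantified over all $\epsilon$ trivially implies the bound for any single $\epsilon$, so $\alg$ is $(\epsilon,\delta)$-PAC with the same sample complexity $N(\epsilon,\delta)$.

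The third claim is the main obstacle, and I would handle it with a layer-cake (peeling) decomposition. Using the identity $\Delta_k=\int_0^\infty \ind\{\Delta_k>\epsilon\}\,d\epsilon$ together with the boundedness $\Delta_k\in[0,H]$, I interchange the finite sum with the integral and then invoke $\mathcal{E}$:
\begin{align*}
\text{Regret}(K)=\sum_{k=1}^K \Delta_k=\int_0^H \sum_{k=1}^K \ind\{\Delta_k>\epsilon\}\,d\epsilon \le \int_0^H \min\{K,\ N(\epsilon,\delta)\}\,d\epsilon,
\end{align*}
where the last step uses both the trivial count $K$ and the uniform-PAC count $N(\epsilon,\delta)$. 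Substituting $N(\epsilon,\delta)=\tilde O(C_1/\epsilon+C_2/\epsilon^2)$ and splitting the integral at the threshold $\epsilon_0=\sqrt{C_2/K}$ — using the count $K$ below $\epsilon_0$ and $N(\epsilon,\delta)$ above it — the quadratic term contributes $K\epsilon_0=\sqrt{C_2 K}$ from the lower piece and $C_2/\epsilon_0=\sqrt{C_2 K}$ from the upper piece, while the linear term contributes $C_1\log(H/\epsilon_0)=\tilde O(C_1)$.

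The delicate point, and the source of the additive $\max\{C_1,C_2\}$ in the stated bound, is the boundary behavior. The split at $\epsilon_0$ is only admissible when $\epsilon_0\le H$, i.e. $K\ge C_2/H^2$; in the complementary small-$K$ regime one has $\min\{K,N(\epsilon,\delta)\}=K$ throughout $[0,H]$, giving $\text{Regret}(K)\le KH\le C_2/H\le C_2$ (using $H\ge 1$). Combining the main-regime bound $\tilde O(\sqrt{C_2 K}+C_1)$ with this small-$K$ bound $O(C_2)$ yields $\text{Regret}(K)=\tilde O(\sqrt{C_2 K}+\max\{C_1,C_2\})$, completing claim 3. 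The care required is entirely in choosing the split correctly and in tracking how the trivial $H$-cap on each gap forces the $\max\{C_1,C_2\}$ overhead; the rest of the argument is routine integration.
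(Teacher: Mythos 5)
This theorem is imported verbatim from \citet{dann2017unifying} (their Theorem~3); the paper you are reading states it without proof, so there is no in-paper argument to compare against. Judged on its own, your proof is correct. Claims 1 and 2 are handled exactly as one would expect: on the uniform-PAC event each fixed $\epsilon$ admits only finitely many $\epsilon$-suboptimal episodes, forcing $\limsup_k \Delta_k \le \epsilon$ for every $\epsilon$, and the fixed-$\epsilon$ PAC statement is a trivial specialization. For claim 3, your layer-cake bound
\begin{align*}
\sum_{k=1}^K \Delta_k = \int_0^H \sum_{k=1}^K \ind\{\Delta_k>\epsilon\}\,d\epsilon \le \int_0^H \min\{K, N(\epsilon,\delta)\}\,d\epsilon
\end{align*}
with the split at $\epsilon_0=\sqrt{C_2/K}$ is sound, and you are right that the $\max\{C_1,C_2\}$ overhead comes from the boundary regime where $\epsilon_0 > H$ (plus the harmless $C_1\log(H/\epsilon_0)$ from the linear term, absorbed into $\tilde O$). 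The original proof in \citet{dann2017unifying} takes the dual route: it sorts the gaps in decreasing order, uses the counting bound to argue that the $i$-th largest gap is at most $\inf\{\epsilon : N(\epsilon,\delta)<i\} \lesssim C_1/i + \sqrt{C_2/i}$, and sums over $i\le K$. The two arguments are interchangeable (summing the inverse function versus integrating the counting function); yours is arguably slightly cleaner in how it isolates where the additive $\max\{C_1,C_2\}$ term arises, while the sorted-gap version makes the per-episode gap decay more explicit. Either way, no gap in your reasoning.
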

Due to the strong implication of the uniform-PAC guarantee, one may wonder if uniform-PAC is also achievable under the setting of general function approximation. 



\subsection{Complexity Measure of a Function Class}
To deal with general function class $\cF$, we will use two complexity measures.
The first one is covering number, which is formally defined as follow.
\begin{definition}
Suppose $\| \cdot \|$ is a norm on $\cF$ and $\alpha > 0$. A $\alpha$-covering with respect to $\| \cdot \|$ is a subset $\cG \subseteq \cF$, such that $\forall f \in \cF, \exists g \in \cG, \text{s.t. } \| f - g \| \leq \alpha$. The covering number $\cN(\cF, \alpha, \| \cdot \|)$ is the minimal cardinality of any $\alpha$-covering of $\cF$ with respect to $\| \cdot \|$.
\end{definition}

For many function classes, the log-covering number (a.k.a. metric entropy) $\log \cN(\cF, \alpha, \| \cdot \|)$ is linear in the dimension of $\cF$ and only logarithmic in $\alpha^{-1}$. One such function class $\{ f_{\btheta} | \btheta \in \bTheta \}$ has been discussed in \citet{russo2013eluder}, where $f_{\btheta}(\xb)$ is $L$-Lipschitz with respect to $\btheta$ and $\bTheta = [0,1]^{d}$. It can be shown that $\log \cN(\cF, \alpha, \|\cdot\|_{\infty}) \leq d \log(1 + L/\alpha)$.
Therefore, we make the following mild assumption.
\begin{assumption} \label{assumption:covering}
The metric entropy $\log \cN(\cF, \alpha, \|\cdot\|_{\infty})$ of the function class $\cF$ is bounded linearly by $\log(\alpha^{-1})$, i.e.,
$
    \log \cN(\cF, \alpha, \|\cdot\|_{\infty})
     \le 
    d_K \log(\alpha^{-1})$.
\end{assumption}
We can view $d_K$ as an upper bound on the Kolmogorov dimension  of the function class $\cF$ (see e.g., \citet{osband2014model}for more details).



The other complexity measure is the eluder dimension, which is first proposed by \citet{russo2013eluder} based on the concept of $\epsilon$-independence:
\begin{definition}
An input $\xb \in \cX$ is $\epsilon$-dependent on inputs $\{ \xb_1, \xb_2, \dots, \xb_n \} \subseteq \cX$ with repect to $\cF$ if any pair of function $f_1, f_2 \in \cF$ satisfying $\sum_{i=1}^{n} \big( f_1(\xb_i) - f_2(\xb_i) \big)^2 \le \epsilon^2$ also satisfies $f_1(\xb) - f_2(\xb) \le \epsilon$. $\xb$ is $\epsilon$-independent of $\{ \xb_1, \xb_2, \dots, \xb_n \} \subseteq \cX$ with respect to $\cF$ if it is not $\epsilon$-dependent on $\{ \xb_1, \xb_2, \dots, \xb_n \} \subseteq \cX$. 
\end{definition}
Then the eluder dimension is formally defined as follows.
\begin{definition}
The \textit{$\epsilon$-eluder dimension} $\mathrm{dim}_E(\cF, \epsilon)$ is the length $d$ of the longest sequence of elements in $\cX$ such that, for some $\epsilon' > \epsilon$, every element is $\epsilon'$-independent of its predecessors.
\end{definition}

\section{ Uniform-PAC Bounds for Nonlinear Bandits}

\subsection{The Bandit Problem with General Reward Functions}
In the nonlinear bandit problem, at each round $k \in \NN$, the agent selects an action $\xb_k$ from the action set $\cA_k$, and then receives the reward $R_k = f_{\btheta^*}(\xb_k) + \eta_k$, where the true reward function $f_{\btheta^*}$ is assumed to lie in a set of bounded, real-valued functions $\cF = \{ f_{\btheta}: \cX \rightarrow [0,1] | \btheta \in \bTheta \}$ which is indexed by $\btheta\in \bTheta$. 
$\eta_k$ is a conditionally unbiased $1$-sub-Gaussian noise:
\begin{align*}
    \forall k \in \NN, \lambda \in \RR,
    \EE[e^{\lambda \eta_k} |  \xb_1, \eta_1, \xb_2, \dots, \xb_k] \le e^{\lambda^2 /2}.
\end{align*}

Formally speaking, an agent selecting $\{ \xb_k \}_{k \in \NN}$ achieves uniform-PAC with complexity $N(\epsilon, \delta)$ if and only if
\begin{align*}
{
    \PP\big(\forall \epsilon >0,\ \sum_{k=1}^\infty\ind\{\Delta_k
    >\epsilon\} \le  N(\epsilon, \delta)\big) \ge 1 - \delta,}
\end{align*}
where $\Delta_k: = \max_{\xb \in \cA_k} f_{\btheta^*}(\xb) -  f_{\btheta^*}(\xb_k)$ denotes the suboptimality gap at round $k$.

\subsection{Algorithm}

We first present an $\cF$-UPAC-OFUL algorithm in Algorithm~\ref{algorithm:bandit}. The high-level idea of Algorithm~\ref{algorithm:bandit} is to split the rounds into several disjoint sets $\cC^{l}$, and apply the approach of optimistic exploration within each set. 

More specifically, in each set $\cC^{l}$, we use the information from the rounds $k$ such that $k \in \cC^l$ to construct the confidence set $\cF^l \subseteq \cF$ (Line~\ref{algline:confidence-set}), and try to find the most optimistic action (Lines~\ref{algline:take-action-1}-\ref{algline:take-action-2}), namely the action with largest upper confidence bound (UCB).
Line~\ref{algline:confidence-set} defines the confidence set $\cF^{l}$ with:
\begin{align}
    \cL_{\cC^{l}}
    (f, \hat{f})
    & :=
    {\sum_{k \in \cC^{l}}}
    \big(
    f(\xb_k) - \hat{f}(\xb_k)
    \big)^2, \notag \\
    \hat{f}^l
    & :=
     {\arg \min_{f \in \cF}  
    \sum_{k \in \cC^{l}}
    \big(
    f(\xb_k) - R_k
    \big)^2,}
    \notag \\
    \beta^{l}_t
    & := 
     {8  \log
    \big(\cN(\cF, \alpha, \|\cdot\|_{\infty}) / \delta
    \big)} 
    +
     {2 \alpha t
    \big( 8  + \sqrt{8  \log(4t^2/  2^{-l} \delta)}
    \big)},  \label{eqn:beta-bandit}
\end{align}
where the metric entropy can be replaced by its upper bound.
When the action $\xb_k$ is chosen, the algorithm needs to decide which level set the index $k$ should be assigned to (Lines~\ref{algline:assign-level-1}-\ref{algline:assign-level-2}). The decision is made by the scale of the width $w_{\cF^{l}}(\xb_k)$ at Line~\ref{algline:find-level-1}:
\begin{align*}
    w_{\cF^{l}}(\xb_k)
    & :=
    {
    \sup_{f \in \cF^{l}} f(\xb_k)
    -
    \inf_{f \in \cF^{l}} f(\xb_k)}.
\end{align*}
Once the index $k$ is assigned to a particular set, the algorithm updates the total level $S$ as the number of non-empty sets.


It can be shown that when $\cF$ is a linear function class, i.e., $f_{\btheta^*}(\xb_k) = \la \btheta^*, \xb_k \ra $, the confidence set for level $l$ becomes an ellipsoid $\{ \btheta | \| \btheta - \hat{\btheta}^l \|_{\Vb^l} \le \beta^l_t \}$, where $\Vb^l = \sum_{k \in \cC^l} \xb_k \xb_k^{\top}$ is the covariance matrix of the contexts at level $l$. In this case, the width $w_{\cF^{l}}(\xb)$ has a closed form $w_{\cF^{l}}(\xb) = 2 \beta_T^l \| \xb \|_{\Vb^l}$. This is exactly the bonus term used in LinUCB/OFUL, and \citet{he2021uniform} uses this term as a criterion to assign the contexts to the appropriate level in order to achieve uniform PAC guarantee for UPAC-OFUL.

\begin{algorithm}[H]
\caption{\algname}\label{algorithm:bandit}
\begin{algorithmic}[1]
\State Set $\cC^l \leftarrow \emptyset, l \in \NN$ and the total level $S=1$
	\For {round $k=1,2,..$}
	        \For {all level $l\in [S]$}
	            \State Denote $\cF^{l} = \{ f | \cL_{\cC^l}(f, \hat{f}^{l}) \le \beta^{l}_{|\cC^l|}  \}$ \label{algline:confidence-set}
	        \EndFor

	    \State Receive the action set $\cA_k$ \label{algline:take-action-1}
	    \State Choose action \\ \qquad $\xb_k\leftarrow \argmax_{\xb \in \cA_k} 
	    \sup_{f \in \bigcap_{l \in [S]} \cF^{l}} f(\xb)$
	    \label{algline:choose-action}
	    \State Receive the reward $R_k$ \label{algline:take-action-2}
	    \State Set level $l=1$ \label{algline:assign-level-1}
	    \While{$w_{\cF^l}(\xb_k) \le 2^{- l}$ and $l \leq  S$} \label{algline:find-level-1}
            \State $l\leftarrow l +1$
        \EndWhile \label{algline:find-level-2}
        \State Add the new element $k$ to the set $\cC^{l}$ 
        and update $\cF^l$ accordingly \label{algline:assign-level-2}
        \State Set the total level $S =\max_{l:|\cC^l|>0} l$ \label{algline:assign-level-3}
	\EndFor
\end{algorithmic}
\end{algorithm}

\subsection{Main Results}

Before presenting the main result for Algorithm~\ref{algorithm:bandit}, we explain how $\beta^l_t$ in ~\eqref{eqn:beta-bandit} is chosen. For each $l \in \NN$, we set $\beta_t^l$ as $\alpha = U_l^{-1}$, where $U_l$ satisfies $
    U_l =
    64 d_K d_E 4^{l} \log U_l/ \delta$.
Later in the proof, we will see that $U_l$ serves as an upper bound on the cardinality of $\cC^l$.

\begin{theorem} \label{thm:bandit}
Suppose $\cF$ satisfies Assumption~\ref{assumption:covering}, and denote $\Delta_k: = \max_{\xb \in \cA_k} f_{\btheta^*}(\xb) -  f_{\btheta^*}(\xb_k)$ and $d_E := \mathrm{dim}_{E}(\cF, \epsilon/2)$. 
Then there exists a constant $c$ such that with probability $1-2\delta$, for all $\epsilon > 0$, Algorithm \ref{algorithm:bandit} satisfies
\begin{align*}
    & 
    {\sum_{k=1}^\infty}
    \ind
    \{ 
    \Delta_k
    >
    \epsilon
    \}
    \le 
    c
    \cdot 
    \frac{d_K d_E}{\epsilon^{2}}
    \log( \frac{d_K d_E}{\epsilon \delta}).
\end{align*}
\end{theorem}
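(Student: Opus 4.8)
The plan is to combine a uniform confidence-set argument with the eluder-dimension counting bound, organized around the level decomposition $\{\cC^l\}$ produced by the algorithm. First I would show that, with probability at least $1-\delta$, the true reward function lies in every confidence set simultaneously, i.e. $f_{\btheta^*}\in\cF^l$ for all levels $l$ and all rounds. This is exactly the role of the radius $\beta^l_t$: a standard least-squares concentration bound controls $\cL_{\cC^l}(f_{\btheta^*},\hat{f}^l)$, and the metric-entropy term $d_K\log(\alpha^{-1})$ from Assumption~\ref{assumption:covering} discretizes $\cF$ so the bound holds uniformly over the class; the factor $2^{-l}\delta$ inside $\beta^l_t$ allocates failure probability $2^{-l}\delta$ to level $l$, so a union bound over all levels (and over rounds via the $4t^2$ factor) still sums to at most $\delta$.

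On this good event, optimism yields the crucial pointwise inequality. Since $f_{\btheta^*}\in\bigcap_{l\in[S]}\cF^l$ and $\xb_k$ maximizes $\sup_{f\in\bigcap_l\cF^l}f(\xb)$ over $\cA_k$, writing $\xb^\star$ for the true maximizer gives $f_{\btheta^*}(\xb^\star)\le\sup_f f(\xb_k)$ and $f_{\btheta^*}(\xb_k)\ge\inf_f f(\xb_k)$, hence
\[\Delta_k\le\sup_{f\in\bigcap_l\cF^l}f(\xb_k)-\inf_{f\in\bigcap_l\cF^l}f(\xb_k)=w_{\bigcap_l\cF^l}(\xb_k)\le w_{\cF^{l'}}(\xb_k)\]
for every single level $l'$, because intersecting confidence sets only shrinks the width.

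Next I would translate this into a per-level gap bound. By the while-loop in Lines~\ref{algline:find-level-1}--\ref{algline:find-level-2}, a round assigned to level $l$ has passed the width test at every lower level, so $w_{\cF^{l-1}}(\xb_k)\le 2^{-(l-1)}$ (using $\Delta_k\le 1=2^{0}$ to cover $l=1$), giving $\Delta_k\le 2^{-(l-1)}$; equivalently, any round with $\Delta_k>\epsilon$ must sit in some level $l$ with $2^{-(l-1)}>\epsilon$, i.e. $l\le L:=\lceil\log_2(2/\epsilon)\rceil$. It remains to bound $|\cC^l|$. Each element newly added to $\cC^l$ through the large-width branch satisfies $w_{\cF^l}(\xb_k)>2^{-l}$ with respect to the confidence set built from the earlier elements of $\cC^l$, which is precisely the configuration controlled by the Russo--Van Roy width-counting lemma, bounding the number of rounds whose width exceeds a threshold $\eta$ by $O\big((\beta^l/\eta^2+1)\,\mathrm{dim}_E(\cF,\eta)\big)$. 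Taking $\eta=2^{-l}$, using $\beta^l=\tilde O(d_K)$ (the first term dominates since $\alpha=U_l^{-1}$ and $|\cC^l|\le U_l$ make the second term $\tilde O(1)$), and noting $\mathrm{dim}_E(\cF,2^{-l})\le\mathrm{dim}_E(\cF,\epsilon/2)=d_E$ by monotonicity of the eluder dimension on the relevant range $2^{-l}>\epsilon/2$, yields $|\cC^l|\le U_l=\tilde O(d_K d_E 4^l)$, matching the defining equation for $U_l$.

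Summing over the at most $L$ active levels finishes the argument:
\[\sum_{k=1}^\infty\ind\{\Delta_k>\epsilon\}\le\sum_{l=1}^{L}|\cC^l|\le\sum_{l=1}^{L}U_l=\tilde O\big(d_K d_E\,4^{L}\big)=\tilde O\Big(\frac{d_K d_E}{\epsilon^2}\Big),\]
where the geometric sum is dominated by its last term and $4^{L}=O(\epsilon^{-2})$; tracking the $\log(U_l/\delta)$ factor inside $U_l$ produces the claimed $\log(d_K d_E/(\epsilon\delta))$. Crucially, the level decomposition and all the high-probability events are defined independently of $\epsilon$, so the same good event serves every $\epsilon$ at once, which is what upgrades the conclusion from fixed-$\epsilon$ PAC to uniform-PAC. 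I expect the main obstacle to be the width-counting step: adapting the eluder argument so that the radius $\beta^l$ is valid for the self-referential sequence of rounds in $\cC^l$, whose confidence sets are rebuilt from their own past, and closing the union bound over the unbounded number of levels against the $\delta$ budget. The optimism and summation steps are comparatively routine once these two ingredients are secured.
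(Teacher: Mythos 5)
Your proposal follows essentially the same route as the paper's proof: establish that $f_{\btheta^*}$ lies in every level's confidence set uniformly over rounds (the paper's Lemma~\ref{lemma:confidence-set}, with the per-level failure budget $2^{-l}\delta$ and the indicator $\ind\{l_k=l\}$ folded into the martingale), use optimism to bound $\Delta_k$ by the width at the assigned level, observe that any round with $\Delta_k>\epsilon$ must land in a level $l\le l_0$ with $2^{-l_0}<\epsilon$, and bound each $|\cC^l|$ by $U_l=\tilde O(d_Kd_E4^l)$ via Proposition~3 of \citet{russo2013eluder} before summing the geometric series. The decomposition, the key lemmas, and even the identified technical obstacles coincide with the paper's argument, so no further comparison is needed.
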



\paragraph{Optimality of the Result}  According to Theorem~3 in~\citet{dann2017unifying}, our result in Theorem~\ref{thm:bandit} can be converted to the same regret bound as \citet{russo2013eluder}, i.e.,  $\tilde{O}(\sqrt{d_K d_E K})$.
Under the linear bandit setting, this result becomes $\tilde{O}(d \sqrt{T})$ because $d_K=d_E = d$  and cannot be improved without additional assumptions. 
Another evidence is that \citet{wagenmaker2022reward} provided a lower bound on fixed-epsilon PAC for linear bandits of $\Omega(d^2/\epsilon^2)$ (Theorem 2). Once again, Theorem~3 in~\citet{dann2017unifying} can convert our result into theirs  since uniform-PAC covers both regret and PAC.
Therefore, this suggests that our result is tight (See Table~\ref{table:bandit} for details).

\paragraph{Computational Efficiency} Generally speaking, the most computationally expensive step is to compute $\arg \sup_{f \in \cF'} f(\xb)$ and $w_{\cF^{l}}(\xb)$, that is to find the optimum function on some given input $\xb$ within the confidence set $\cF'$ (e.g., Line~\ref{algline:choose-action} or Line~\ref{algline:find-level-1}). While these optimization problems can be solved efficiently (or even analytically) for the linear function class, for general function class, the computational efficiency will be more subtly related to the structure of the function class, as well as the optimization algorithm for finding the maximizer.

\section{Uniform-PAC Bounds for Episodic MDPs}
\begin{algorithm*}[h]
	\caption{$\cF$-UPAC-VTR} \label{algorithm:MDP}
	\begin{algorithmic}[1]
    \Require Confidence radius $\beta^l_t(l,t \in \NN)$
    \State Set $\cC^l \leftarrow \emptyset, l \in \NN$ and the total level $S=1$
	\State Denote $\cB^{l} = \{ P \in \cP | \cL_{\cC^l}(P, \hat{P}^{l}) \le \beta^{l}_{|\cC^{l}|}  \}$\label{algline:MDP-confidence-set}
	\For {episode $k=1,2,..$}
        \State Receive the initial state $s_{k,1}$ \label{algline:MDP-planning-1}
	    \State Choose the optimistic model $P_k \leftarrow \argmax_{P \in \bigcap_{l \in  [S]} \cB^{l}} V^{*,P}_{1} (s_{k,1})$ \label{algline:MDP-planning-2}
	    \State Compute value functions $Q_{k,h}$ and $V_{k,h}, h \in [H]$ for $P_k$ according to Equation~\eqref{eq:value-functions}.\label{algline:MDP-value-func}
	    \For {$h=1,2,\dots,H$} \label{algline:MDP-perform-1}
	        \State Choose the current action $a_{k,h} \leftarrow \argmax_{a \in \cA} Q_{k,h}(s_{k,h},a)$ \label{algline:MDP-perform-2}
	        \State Receive the reward and the next state $s_{k,h+1}$  \label{algline:MDP-perform-3}
	        \State Denote $X_{k,h} = (s_{k,h}, a_{k,h}, V_{k,h+1})$ for  $h < H$ \label{algline:MDP-perform-4}
	    \EndFor \label{algline:MDP-perform-5}
	    
	    \For {$h=1,2,\dots,H-1$} \label{algline:MDP-assign-1}
	        \State Set level $l=1$ \label{algline:MDP-assign-2}
            \While{$w_{\cB^l}(X_{k,h})  \le H 2^{- l}$ and $l\leq S$} \label{algline:MDP-assign-3}
                \State $l\leftarrow l +1$ \label{algline:MDP-assign-4}
            \EndWhile \label{algline:MDP-assign-5}
            \State Add the new element $(k,h)$ to the set $\cC^{l}$ (and update $\cB^{l}$ accordingly) \label{algline:MDP-assign-6}
            \State Set $S =\max_{l:|\cC^l|>0}l$ \label{algline:MDP-assign-7}
	    \EndFor \label{algline:MDP-assign-8}
        
	\EndFor
	\end{algorithmic}
\end{algorithm*}

\subsection{Algorithm}
As described in Section~\ref{sec:prelim}, we study the homogeneous episodic MDP, where the unknown, true transition probability $P^*$ lies in a known family $\cP$.

Following~\citet{ayoub2020model}, our results depend on the complexity of a function class $\cF$ associated with $\cP$. Let $\cV$ be the set of optimal value functions under some transition probability in $\cP$, that is $\cV = \{ V^{*, P}_h(\cdot) | h \in [H], P \in \cP \}$. Note that any $V \in \cV$ is positive and bounded by $H$.
Let $\cX = \cS \times \cA \times \cV$, we can see that any triplet $X_{k,h} = (s_{k,h}, a_{k,h}, V_{k,h+1}) \in \cX$. The function class $\cF$ is the collection of functions $f : \cX \rightarrow \RR$ such that
\begin{align*}
    \cF 
    & :=
    \big \{ 
    f_{P}(s,a,V)
    =
    \big \la 
    P(\cdot | s,a)
    ,
    V(\cdot)
    \big \ra 
    \big |
    P \in \cP
    \big \}.
\end{align*}
Note that any $f \in \cF$ is positive and bounded by $H$ because $P(\cdot |s,a)$ is a probability measure and $V(\cdot) \in [0,H]$. We also assume the metric entropy is linearly dependent on $d_K$ as in Assumption~\ref{assumption:covering}.

We present an $\cF$-UPAC-VTR algorithm in Algorithm \ref{algorithm:MDP}.  Similar to Algorithm~\ref{algorithm:bandit}, Algorithm~\ref{algorithm:MDP} will maintain several disjoint sets $\cC^{l}$ , and construct the confidence set (Line~\ref{algline:MDP-confidence-set}) within each set. In particular, we adapt the algorithm design from \citet{ayoub2020model}, and the confidence set $\cB^l$ in Line~\ref{algline:MDP-confidence-set} is defined using:
\begingroup
\allowdisplaybreaks
\begin{align}
    \cL_{\cC^{l}_k}
    (P, \hat{P})
    & :=
    {\sum_{(k,h) \in \cC^{l}_k}}
    \big(
    \big \la 
    P(\cdot | s_{k,h}, a_{k,h})
    -
    \hat{P}(\cdot | s_{k,h}, a_{k,h})
    ,
    V_{k,h+1}(\cdot)
    \big \ra 
    \big)^2,
    \notag \\
    \hat{P}_k^l
    & :=
    \arg \min_{P \in \cP}  
    {\sum_{(k,h) \in \cC^{l}_k}}
    \big(
    \big \la 
    P(\cdot | s_{k,h}, a_{k,h})
    ,
    V_{k,h+1}
    \big \ra 
    -
    V_{k,h+1}(s_{k,h+1})
    \big)^2,
    \notag \\
    \beta^{l}_t
    & := 
    2 H^2 \log
    \big(\cN(\cF, \alpha, \|\cdot\|_{\infty}) / \delta
    \big) 
    +
    {
    2 \alpha t
    \big( 8 H + \sqrt{2 H^2 \log(4t^2/  2^{-l} \delta)}
    \big)}. \label{eqn:beta-MDP}
\end{align}
\endgroup
Based on the confidence sets, Algorithm~\ref{algorithm:MDP} performs the optimistic planning (Line~\ref{algline:MDP-planning-1}-\ref{algline:MDP-perform-5}) at the beginning of each episode $k$. In particular, Algorithm~\ref{algorithm:MDP} first chooses an optimistic model $P_k$ that belongs to the confidence sets of all levels (Line~\ref{algline:MDP-planning-2}). As long as the true transition probability $P^*$ belongs to the confidence sets, the optimistic optimal value function $V^{*,P_k}_1(s_{k,1})$ under $P_k$ will serve as an upper bound on the true value function $V^*_1(s_{k,1})$. 

Then, Algorithm~\ref{algorithm:MDP} computes the optimal value function under the optimistic model via dynamic programming (Line~\ref{algline:MDP-value-func}):
\begin{align}\label{eq:value-functions}
    V_{k,H+1}(s)  &= 0, \notag \\
    Q_{k,h}(s,a) 
    & = 
    r(s,a)
    +
    \big \la 
    P_k(\cdot | s,a)
    ,
    V_{k,h+1}
    (\cdot)
    \big \ra,
    \notag \\
    V_{k,h}(s)
    & = 
    \max_{a \in \cA}
    Q_{k,h}(s,a) .
\end{align}
Based on the value function, the policy $\pi_k$ is implicitly defined as $\pi_{k,h}(s_{k,h}) = \argmax_{a \in \cA} Q_{k,h}(s_{k,h},a)$. The algorithm will follow this policy (Line~\ref{algline:MDP-perform-2}) and receive the corresponding reward and the next state. We denote the triplet of state, action, value function at step $h$ in episode $k$ as $X_{k,h} :=(s_{k,h}, a_{k,h},  V_{k,h+1})$.

After the planning phase, Algorithm~\ref{algorithm:MDP} enters the assigning phase (Line~\ref{algline:MDP-assign-1}-\ref{algline:MDP-assign-7}). At Line~\ref{algline:MDP-assign-3}, we utilize the width $w_{\cB^l}(X_{k,h})$ to determine which level $l$ the index $(k,h)$ should be assigned to. The diameter of a function class $\cB$ on the triplet $X_{k,h}$ is defined as 
\begin{align*}
    w_{\cB}(X_k)
    & :=
    \sup_{P \in \cB} 
    \big \la 
    P(\cdot | s_{k,h}, a_{k,h}),
    V_{k,h+1}(\cdot)
    \big \ra 
    -
    \inf_{P \in \cB}
    \big \la 
    P(\cdot | s_{k,h}, a_{k,h}),
    V_{k,h+1}(\cdot)
    \big \ra. 
\end{align*}
Once all indices are assigned and the confidence sets are updated (Line~\ref{algline:MDP-assign-6}), the algorithm updates the maximum level $S$ and repeats the next episode (Line~\ref{algline:MDP-assign-7}).

\subsection{Main Results}
For each $l \in \NN$, we set $\beta_t^l$ as $\alpha = U_l^{-1}$, where $U_l$ satisfies $
    U_l =
    64 H^2 d_K d_E 4^{l} \log U_l/ \delta$.
In Lemma~\ref{lemma:bounded-set-MDP}, we will see that $U_l$ serves as an upper bound on the cardinality of $\cC^l$.

\begin{theorem} \label{thm:MDP}
Suppose $\cF$ satisfies Assumption~\ref{assumption:covering}, and denote  $d_E := \mathrm{dim}_{E}(\cF, \epsilon/8H)$. 
Then there exists a constant $c$ such that with probability $1-3\delta$, for all $\epsilon > 0$, Algorithm~\ref{algorithm:MDP} satisfies
\begin{align*}
    \sum_{k=1}^\infty
    \ind
    \big\{ 
    V_1^*(s_{k,1})
    -  
    V^{\pi_k}_{k,1}(s_{k,1})
    >
    \epsilon
    \big\}
    & \le 
    c \cdot 
    \frac{H^3 d_K d_{E} \log \big( H^2 d_K d_{E} / (\epsilon \delta) \big )}{\epsilon^2}.
\end{align*}
\end{theorem}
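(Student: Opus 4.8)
The plan is to lift the level-based optimistic argument behind Theorem~\ref{thm:bandit} to the episodic MDP. First I would show the confidence sets are \emph{valid}: by the least-squares concentration for value-targeted regression against an $\alpha$-cover of $\cF$ (Assumption~\ref{assumption:covering}), the radius $\beta^l_t$ in \eqref{eqn:beta-MDP} is chosen so that, after a union bound over levels $l$ and a peeling over the growing cardinality $|\cC^l|$, with probability at least $1-2\delta$ the true model obeys $P^* \in \bigcap_{l\in[S]}\cB^l$ at the start of every episode. On this event the optimistic selection at Line~\ref{algline:MDP-planning-2} gives $V^{*,P_k}_1(s_{k,1}) \ge V_1^*(s_{k,1})$, so it suffices to bound the estimation gap $V^{*,P_k}_1(s_{k,1}) - V^{\pi_k}_1(s_{k,1})$, with $\pi_k$ greedy for $P_k$.

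Next I would apply the simulation lemma. Since $\pi_k$ is optimal for $P_k$, we have $V^{*,P_k}_1 = V^{\pi_k,P_k}_1$, and
\[ V^{*,P_k}_1(s_{k,1}) - V^{\pi_k}_1(s_{k,1}) = \sum_{h=1}^{H}\EE_{\pi_k,P^*}\big[\big\la (P_k-P^*)(\cdot|s_h,a_h),V_{k,h+1}(\cdot)\big\ra \,\big|\, s_1=s_{k,1}\big]. \]
Each summand is at most the intersection width $w_{\bigcap_l \cB^l}(X_{k,h})$, because $P_k,P^*\in\cB^l$ for every $l$. The level mechanism then controls this width: if the loop at Line~\ref{algline:MDP-assign-3} places $(k,h)$ in $\cC^l$, the loop must have passed level $l-1$, so $w_{\cB^{l-1}}(X_{k,h})\le H2^{-(l-1)}$, and hence the intersection width, and with it the realized per-step error, is at most $2H\cdot 2^{-l}$.

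I would then control each level's size and aggregate. Every element added to $\cC^l$ has $w_{\cB^l}(X_{k,h})>H2^{-l}$ against a set of radius $\beta^l$, so the Russo--Van Roy eluder counting lemma, applied at threshold $H2^{-l}$ with $d_E=\mathrm{dim}_E(\cF,\epsilon/8H)$, bounds $|\cC^l|\le U_l$ with $U_l$ growing geometrically as $\propto 4^l$ (Lemma~\ref{lemma:bounded-set-MDP}); this is exactly why $\beta^l_t$ is tuned with $\alpha=U_l^{-1}$. For the uniform-PAC count, fix $\epsilon$ and set $L^\star \approx \log_2(H^2/\epsilon)$. Over the $H$ steps of a single episode, the levels $l>L^\star$ contribute a geometric tail of at most $\epsilon/2$ to the estimation gap, so every $\epsilon$-suboptimal episode must draw more than $\epsilon/2$ of its gap from levels $l\le L^\star$. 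Charging the $2H\cdot2^{-l}$ weight of each level-$l$ element against the budget $U_l$ gives $N_\epsilon\cdot\frac{\epsilon}{2}\le \sum_{l\le L^\star}2H\cdot 2^{-l}\,U_l$; because the weights decay as $2^{-l}$ while $U_l$ grows as $4^l$, the sum is dominated by its top level, and with the thresholds tuned exactly so it collapses to $\tilde O(H^3 d_K d_E/\epsilon)$, yielding $N_\epsilon=\tilde O(H^3 d_K d_E/\epsilon^2)$ as claimed.

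The hard part is the passage from the expected decomposition to the realized widths: the observed trajectory is only one sample from $\pi_k,P^*$, so $\Delta_k$ equals the sum of realized per-step errors only up to a martingale-difference term with increments $O(H)$. A naive Azuma bound on its partial sums scales like $\sqrt{K}$ and would overwhelm a $K$-independent count. The fix I would pursue is a Freedman/Bernstein martingale bound whose variance proxy is itself the running sum of squared widths, together with a peeling over a dyadic grid of episode counts, so that the fluctuation is self-bounded by the same width budget that drives the main term and stays uniform over all $\epsilon$ and all (infinitely many) episodes; this extra concentration event accounts for the final $1-3\delta$ of Theorem~\ref{thm:MDP}. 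Getting this martingale control, the $\epsilon/2$ level split, and the geometric level sum to balance so that exactly $H^3$ (and not a higher power of $H$) survives is the delicate calculation.
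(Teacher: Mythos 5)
Your overall architecture matches the paper's: validity of the confidence sets via least-squares concentration against an $\alpha$-cover with a union bound over levels (Lemma~\ref{lemma:confidence-set-MDP}), optimism at Line~\ref{algline:MDP-planning-2}, a value-difference decomposition into per-step model errors (Lemma~\ref{lemma:regret-decomp}), the bound $w\le H2^{-(l-1)}$ for an element assigned to level $l$ because the while loop passed level $l-1$, the eluder count $|\cC^l|\le U_l\propto 4^l$ (Lemma~\ref{lemma:bounded-set-MDP}), and the split at a cutoff level $l_0\approx\log_2(H^2/\epsilon)$ so that the deep levels contribute at most half the gap and the shallow levels are charged against $\sum_l 2^{-l}U_l=\tilde O(Hd_Kd_E2^{l_0})$. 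That arithmetic reproduces the paper's $\tilde O(H^3d_Kd_E/\epsilon^2)$.

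The one place you deviate is the step you yourself flag as hard, and there your proposed fix has a genuine problem. You suggest a Freedman/Bernstein bound ``whose variance proxy is itself the running sum of squared widths.'' But the fluctuation term is $\xi_{k,h+1}=\la P^*(\cdot|s_{k,h},a_{k,h}),V_{k,h+1}-V^{\pi_k}_{k,h+1}\ra-(V_{k,h+1}(s_{k,h+1})-V^{\pi_k}_{k,h+1}(s_{k,h+1}))$, whose conditional variance is governed by the dispersion of the \emph{future} optimistic-versus-true value gap under $P^*$, not by the width $w_{\cB^{l_{k,h}}}(X_{k,h})$ at the current step; a step can have a tiny width while $V_{k,h+1}-V^{\pi_k}_{k,h+1}$ still varies on the order of $H$. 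So the self-bounding variance proxy you posit is not available, and building the argument on it would fail. The paper sidesteps this entirely (Lemma~\ref{lemma:martingale-bounded}): it applies plain Azuma only to the subsequence of the $m$ episodes whose gap exceeds the current threshold $H2^{-L}$ (this restriction is legitimate because the indicator of a bad episode is determined before the trajectory is sampled), obtaining a fluctuation $2H\sqrt{2mH\log(2^L\delta^{-1})}$ that is measured in $m$ rather than $K$; substituting into $mH2^{-L}\le \tfrac{1}{2}mH2^{-L}+\tilde O(Hd_Kd_E2^L)+2H\sqrt{2mH\log(2^L\delta^{-1})}$ and solving the quadratic in $m$ absorbs the $\sqrt{m}$ term. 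Your instinct that a na\"ive $\sqrt{K}$ Azuma bound is fatal is correct, but the remedy is this restriction-plus-quadratic-solve, not a width-based Freedman inequality; with that substitution your proof closes and coincides with the paper's.
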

\paragraph{Relation to the Regret Bound in \citet{ayoub2020model}} 
Our result $\tilde{O}(H^3 d_Kd_E \epsilon^{-2})$ can be converted into a regret bound of order $\tilde{O}(\sqrt{H^3 d_Kd_E K})$, where $K$ is the total number of episodes. This matches the regret bound $\tilde{O}(\sqrt{H^3 d_Kd_E K})$ from Theorem~1 in \citet{ayoub2020model} up to logarithmic factors. \\
Additionally, \citet{zhou2020nearly} shows that the regret bound of \citet{ayoub2020model} is nearly tight, off by a factor of $\sqrt{H}$ from optimal, in the setting of linear mixture MDPs.  This suggests that our results under uniform-PAC guarantee can no longer be improved in terms of $d_K$, $d_E$ and $K$ (see Table~\ref{table:mdp} for details). 

\subsection{Potential Extensions to More General RL Classes}
One may ask whether our results can be extended to more general RL classes such as Bellman eluder (BE) dimension \citep{jin2021bellman} and bilinear classes \citep{du2021bilinear}. 
Here, we discuss the possibility of establishing Uniform-PAC guarantees for these RL classes. 

First of all, we would like to clarify that our model-based setting cannot be covered by the BE dimension, because the BE dimension cannot cover linear mixture MDPs (see e.g., Figure 1 by \citet{chen2022general} for a detailed classification), let alone the nonlinear generalization of linear mixture MDPs considered in our work. 

For low BE dimension \citet{jin2021bellman}, the original paper considers two different distribution families, $\cD_{\cF}$ and $\cD_{\Delta}$. $\cD_{\cF}$ generalizes the notion of Bellman rank, while $\cD_{\Delta}$ generalizes the eluder dimension. The ``low BE dimension'' actually means one of the distribution families has a low BE dimension. The multi-level partition scheme can be applied to the GOLF algorithm proposed by \citet{jin2021bellman}, in the sense that we can use the instance-wise Bellman error $|[f_h^k - \cT_h f_h^{k+1}](s_h^k, a_h^k)| \in (2^{-l}, 2^{-l+1}]$ (here $h$ stands for the $h$-th step and $k$ stands for the $k$-th episode) as the criterion to assign step $(k,h)$ to level $l$. By doing so, we can establish the Uniform-PAC guarantee in terms of the BE dimension of the single state-action pair distribution family $\cD_{\Delta}$. On the other hand, it is hard to establish the Uniform-PAC guarantee for the family $\cD_{\cF}$, because we cannot access the expected Bellman error from a single sample. The difference between the expected Bellman error and the instance-wise Bellman error will cause error in the level assignment, which needs to be carefully controlled in order to achieve $\epsilon^{-2}$ sample complexity.  

For bilinear class, \citet{du2021bilinear} assumed the expected Bellman error for some hypothesis $f$ has a bilinear form of $| \la W_h(f) - W_h(f^*), X_h(f) \ra |$. Intuitively, at episode $k$ and step $h$, the quantity $\|X_h(f_k)\|_{\Sigma_{k;h}^{-1}}$ can be used as a criterion for level assignment. However, in most cases, $X_h(f_k)$ is defined as an expectation over the stationary distribution (see e.g., Section 4.3 by \citet{du2021bilinear}), which cannot be computed exactly and has to be estimated using a single sample. Therefore, we again face the challenge of controlling the estimation error of the criterion, which may need a more delicate technique to achieve $\epsilon^{-2}$ PAC bound. 

We leave both extensions as future work.

\section{Proof Overview for Nonlinear Bandits} \label{sec:proof-bandit}
In this section, we show the proof of Theorem~\ref{thm:bandit}, which will illustrate the key idea of achieving uniform-PAC guarantee for bandits and model-based RL with small eluder dimension. 

We will use $\cC^l_{k}$ to denote the level set $\cC^l$ before insert the index $k$ into any level set. $\cB^l_{k}$ and $\hat{P}^l_{k}$ are defined based on $\cC^l_{k}$ accordingly. $l_k$ is the level that the index $k$ is assigned to.

The proof relies on the following two lemmas. The first lemma states that each level set only contains bounded number of indices.
\begin{lemma} \label{lemma:bounded-set}
Suppose we set $\alpha = U_{l}^{-1}$ for $
    \beta^{l}_t= 
    8 \log(\cN(\cF, \alpha, \|\cdot\|_{\infty}) / \delta) +
    2 \alpha t( 8  + \sqrt{8 \log(4t^2/  2^{-l} \delta)})
    $,
where $U_l$ is defined via the equality $
    U_l =
    64 d_K d_E 4^{l} \log U_l/ \delta$.
Then for each level $l > 0$ and each round $k > 0$, the total number of actions in the set $\cC_{k}^l$ is bounded as $| \cC_k^l |
    < 
    U_{l}$,
which also means
\begin{align*}
    | \cC_k^l |
    & < 
    128 d_K d_{E} 4^l \log( 64  d_K d_{E} 4^l / \delta).
\end{align*}
Here, $d_E = \mathrm{dim}_{E}( 2^{-l} )$.
\end{lemma}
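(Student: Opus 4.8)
The plan is to combine two ingredients: the level-assignment rule of Algorithm~\ref{algorithm:bandit}, which certifies that almost every index placed in $\cC^l$ had large width at the moment of insertion, and the eluder-dimension counting bound of \citet{russo2013eluder}, which caps how many points of large width a sequence can contain. The self-referential definition $U_l = 64 d_K d_E 4^l \log(U_l/\delta)$ is then shown to be a consistent upper bound by a contradiction argument that resolves the circular dependence between the confidence radius $\beta^l$ and the cardinality $|\cC^l|$.

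First I would pin down the width property. An index $k'$ is inserted into level $l$ only when the while loop at Lines~\ref{algline:find-level-1}--\ref{algline:find-level-2} terminates at $l$, which happens either because $w_{\cF^l}(\xb_{k'}) > 2^{-l}$ or because $l$ exceeds the current total level $S$. Since $S = \max_{l:|\cC^l|>0} l$ is nondecreasing (a level never empties once populated), the second case occurs at most once per level, namely for the element that first opens level $l$. Consequently every element of $\cC_k^l$ except possibly one satisfies $w_{\cF_{k'}^l}(\xb_{k'}) > 2^{-l}$, where the width is measured against the confidence set built solely from the elements of $\cC^l$ preceding $k'$. This is exactly the configuration to which the eluder machinery applies, and it is important that the confidence set depends only on the subsampled sequence $\{\xb_{k'} : k' \in \cC^l\}$ rather than the full history.

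Next I would invoke the counting bound by contradiction. Suppose $|\cC_k^l| \ge U_l$ for some round $k$ and consider the first $U_l$ indices inserted into level $l$. Throughout this prefix the cardinality never exceeds $U_l$, so by monotonicity of $t \mapsto \beta^l_t$ every insertion was assessed against a confidence set of radius at most $\beta^l_{U_l}$. For any $f_1, f_2 \in \cF^l$ the triangle inequality in the sum-of-squares seminorm gives $\cL_{\cC^l}(f_1,f_2) \le 4\beta^l_{U_l}$, so a width exceeding $2^{-l}$ exhibits $2^{-l}$-independence in the sense of the eluder definition. Applying the counting lemma of \citet{russo2013eluder} with accuracy $\epsilon = 2^{-l}$ bounds the number of large-width insertions by $(4\cdot 4^l \beta^l_{U_l} + 1)\,d_E$ with $d_E = \mathrm{dim}_{E}(\cF, 2^{-l})$; adding the single element that opened the level yields $U_l \le (4\cdot 4^l \beta^l_{U_l} + 1)\,d_E + 1$. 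To bound $\beta^l_{U_l}$ I would use $\alpha = U_l^{-1}$ together with Assumption~\ref{assumption:covering}: since $\log\cN(\cF,\alpha,\|\cdot\|_\infty) \le d_K \log U_l$, the first term is at most $8 d_K\log(U_l/\delta)$ (using $d_K \ge 1$), while the second term $2\alpha U_l(8 + \sqrt{8\log(4U_l^2 2^l/\delta)})$ equals $2(8 + \sqrt{8\log(\cdots)})$ because $\alpha U_l = 1$, hence is only $O(\sqrt{\log(U_l/\delta)})$ and lower order.

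Finally I would close the recursion. Substituting the bound on $\beta^l_{U_l}$, the right-hand side of the counting inequality is at most $32\cdot 4^l d_K d_E \log(U_l/\delta)$ plus lower-order terms, which is strictly below $U_l = 64 d_K d_E 4^l \log(U_l/\delta)$: the factor-of-two slack built into the definition of $U_l$ absorbs the eluder constant and the lower-order contributions. This contradicts $|\cC_k^l| \ge U_l$, giving $|\cC_k^l| < U_l$. The explicit bound then follows by solving the fixed point $U_l = a\log(U_l/\delta)$ with $a = 64 d_K d_E 4^l$: checking that $U_l = 2a\log(a/\delta)$ satisfies $a\log(U_l/\delta) \le 2a\log(a/\delta)$ (equivalently $2\log(a/\delta) \le a/\delta$, valid once $a/\delta$ is large) shows $U_l < 2a\log(a/\delta) = 128 d_K d_E 4^l \log(64 d_K d_E 4^l/\delta)$. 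The main obstacle is precisely the circular coupling between $\beta^l$ and $|\cC^l|$; isolating it inside the self-referential definition of $U_l$ and verifying that the constants close is the crux, alongside the bookkeeping that ensures the counting lemma is applied to the correctly subsampled sequence.
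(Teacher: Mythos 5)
Your proposal is correct and follows essentially the same route as the paper: lower-bound the width of each index at the moment it is inserted into level $l$, apply Proposition~3 of \citet{russo2013eluder} with $\epsilon = 2^{-l}$ and the capped radius $\beta^l_{U_l}$, derive a contradiction from the self-referential definition of $U_l$, and solve the fixed point $U_l = a\log(U_l/\delta)$ for the explicit bound. Your explicit handling of the single index that first opens a level (where the width test is vacuous) is a small refinement the paper glosses over, but it does not change the argument.
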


The second lemma states the designed confidence set contains the true parameter with high probability.
\begin{lemma} \label{lemma:confidence-set}
Suppose $
    \beta^{l}_t= 
    8 \log(\cN(\cF, \alpha, \|\cdot\|_{\infty}) / \delta) +
    2 \alpha t( 8  + \sqrt{8 \log(4t^2/  2^{-l} \delta)})
    $, and $\cF^{l}_k = \{ f \in \cF | \cL_{\cC^{l}_k}(f, \hat{f}^{l}_{k}) \le \beta^l_{|\cC^l_k|}  \}$
With probability $1-2\delta$, we have for all $k >0$ and $l \in [S_k]$, 
$
    f_{\btheta^*} 
    \in 
    \cF_{k}^{l}.
$
\end{lemma}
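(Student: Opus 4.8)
The plan is to combine the defining optimality of the least-squares estimator $\hat{f}^l_k$ with an anytime martingale concentration inequality, made uniform over a covering net of $\cF$, over the random cardinalities $|\cC^l_k|$, and over all levels $l$ at once. Fix a level $l$ and list the indices assigned to it, in order, as $j_1, j_2, \dots$; it suffices to control $\cL_{\cC^l_k}(f_{\btheta^*}, \hat{f}^l_k)$ for every round $k$. \textbf{Step 1 (reduction to a noise cross-term).} Since $\hat{f}^l_k$ minimizes $\sum_{j\in\cC^l_k}(f(\xb_j)-R_j)^2$ and $R_j = f_{\btheta^*}(\xb_j)+\eta_j$, comparing its objective value against that of $f_{\btheta^*}$ and expanding the square gives
\begin{align*}
\cL_{\cC^l_k}(f_{\btheta^*}, \hat{f}^l_k)
= \sum_{j\in\cC^l_k}\big(\hat{f}^l_k(\xb_j)-f_{\btheta^*}(\xb_j)\big)^2
\le 2\sum_{j\in\cC^l_k}\eta_j\big(\hat{f}^l_k(\xb_j)-f_{\btheta^*}(\xb_j)\big),
\end{align*}
so it remains to bound the cross-term on the right by $\beta^l_{|\cC^l_k|}$.

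\textbf{Step 2 (anytime concentration for a fixed net function).} Let $\cG$ be a minimal $\alpha$-cover of $\cF$ in $\|\cdot\|_\infty$, of size $\cN(\cF,\alpha,\|\cdot\|_\infty)$. For a fixed $g\in\cG$, a fixed level $l$, and a parameter $\lambda>0$, set $a_j := g(\xb_j)-f_{\btheta^*}(\xb_j)$. Because $\eta_j$ is conditionally $1$-sub-Gaussian and $a_j$ is measurable with respect to the history up to and including the selection of $\xb_j$, the process $\exp\big(\sum_{j\le t}[\lambda\eta_j a_j - \tfrac{\lambda^2}{2}a_j^2]\big)$ is a nonnegative supermartingale of unit initial value. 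Ville's maximal inequality then yields, with probability $\ge 1-\delta'$ and simultaneously for all $t$,
\begin{align*}
\sum_{j\le t}\eta_j a_j \le \tfrac{\lambda}{2}\sum_{j\le t}a_j^2 + \tfrac{1}{\lambda}\log(1/\delta').
\end{align*}

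\textbf{Step 3 (discretization, self-reference, and union bounds).} Pick $g\in\cG$ with $\|\hat{f}^l_k-g\|_\infty\le\alpha$ and split $\hat{f}^l_k-f_{\btheta^*}=(g-f_{\btheta^*})+(\hat{f}^l_k-g)$. The second piece contributes at most $2\alpha\sum_{j}|\eta_j|$ to the cross-term, which is the source of the $2\alpha t(8+\sqrt{8\log(4t^2/2^{-l}\delta)})$ term once $\max_j|\eta_j|\lesssim\sqrt{\log(t^2/\delta')}$ is controlled on a high-probability event. For the first piece, Step 2 bounds the cross-term by $\tfrac{\lambda}{2}\cL_{\cC^l_k}(f_{\btheta^*},g)+\tfrac1\lambda\log(1/\delta')$; since $\cL_{\cC^l_k}(f_{\btheta^*},g)\le(1+\rho)\,\cL_{\cC^l_k}(f_{\btheta^*},\hat{f}^l_k)+(1+\rho^{-1})\alpha^2 t$, choosing $\lambda,\rho$ so that $\lambda(1+\rho)<1$ lets the quadratic term be absorbed into the left-hand $\cL$ of Step~1, leaving exactly the constant-times-$\log(1/\delta')$ and $\alpha t$ contributions that constitute $\beta^l_t$. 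Finally I take a union bound over the $\cN(\cF,\alpha,\|\cdot\|_\infty)$ net elements—producing the $8\log(\cN(\cF,\alpha,\|\cdot\|_\infty)/\delta)$ term—while allocating failure probability $2^{-l}\delta$ to level $l$; since $\sum_{l\ge1}2^{-l}\delta=\delta$, this explains the $2^{-l}$ inside the logarithm, and the two independent high-probability events (the cross-term supermartingale and the bound on the noise magnitudes used in discretization) together cost at most $2\delta$.

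\textbf{Main obstacle.} The crux is the adaptivity of the level assignment: $\cC^l_k$ is \emph{not} a fixed prefix of rounds but is selected on the fly from the data-dependent widths $w_{\cF^l}(\xb_k)$, and $|\cC^l_k|$ is itself random. I must therefore verify that the supermartingale structure of Step~2 survives this selection—that conditioning on the event ``round $k$ is assigned to level $l$'' does not bias $\eta_k$—by working in the filtration generated by the entire interaction history and using that both $\xb_k$ and its assigned level are determined before $\eta_k$ is revealed. Guaranteeing the concentration \emph{simultaneously} over all rounds $k$, all levels $l$, and the adaptively growing index sets $\cC^l_k$ (rather than for a fixed design) is the technical heart of the argument; the anytime maximal inequality together with the per-level $2^{-l}\delta$ budget is precisely what makes the fully uniform statement $f_{\btheta^*}\in\cF^l_k$ for all $k$ and $l\in[S_k]$ possible.
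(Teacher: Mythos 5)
Your proposal is correct and follows essentially the same route as the paper's proof (which itself adapts \citet{russo2013eluder}): least-squares optimality to reduce to a noise cross-term, an exponential supermartingale with Ville's maximal inequality for anytime concentration, a union bound over an $\alpha$-cover with a separate discretization-error term, a $2^{-l}\delta$ failure budget per level, and the key observation that the level indicator $\ind\{l_k=l\}$ is measurable before $\eta_k$ is revealed so the martingale structure survives the adaptive partition. The only cosmetic differences are that the paper bounds the combined increment $Z_k=\ind\{l_k=l\}[-(f-f_{\btheta^*})^2+2(f-f_{\btheta^*})\epsilon_k]$ in one step and invokes Russo's Lemma~5 for the discretization, whereas you isolate the cross-term first and absorb the quadratic via a $(1+\rho)$ trick.
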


Now we proceed to prove the main result.

\begin{proof}[Proof of Theorem~\ref{thm:bandit}]
Denote $\xb_k^{*} = \argmax_{a\in \cA_k} f_{\btheta^*}(\xb) $.
Under the event of Lemma~\ref{lemma:confidence-set}, we have
\begin{align*}
    \max_{a\in \cA_k} f_{\btheta^*}(\xb) 
    -  
    f_{\btheta^*}(\xb_k)
    & 
    =
    f_{\btheta^*}(\xb^*_k) 
    -  
    f_{\btheta^*}(\xb_k)
    \\
    & \le 
    \max_{f \in \cF^{l_k}_k}
    f(\xb^*_k) 
    -
    \min_{f \in \cF^{l_k}_k}
    f(\xb_k) 
    \\
    & \le 
    \max_{f \in \cF^{l_k}_k}
    f(\xb_k) 
    -
    \min_{f \in \cF^{l_k}_k}
    f(\xb_k) \\ 
    & =
    w_{\cF_k^{l_k}}(\xb_k),
\end{align*}
where the first inequality is by Lemma~\ref{lemma:confidence-set}; the second inequality is by the definition of $\xb_k$ that is the optimistic action. The last equality is by the definition of $w_{\cF_k^{l_k}}(\cdot)$. 

Therefore,  by choosing the level $l_0$ such that $2^{-l_0} < \epsilon \le  2^{-(l_0-1)}$, we have
\begin{align*}
    {\sum_{k=1}^\infty
    \ind
    \big\{ 
    \max_{a\in \cA_k} f_{\btheta^*}(\xb) 
    -  
    f_{\btheta^*}(\xb_k)
    >
    \epsilon
    \big\} }
    & \le 
    {\sum_{k=1}^\infty
    \ind
    \big\{ 
    w_{\cF_k^{l_k}}(\xb_k)
    >
    \epsilon
    \big\}}
    \\ 
    & \le 
    {\sum_{k=1}^\infty
    \ind
    \big\{ 
    w_{\cF_k^{l_k}}(\xb_k)
    >
    2^{-l_0}
    \big\}}
    \\
    & \le 
    {\sum_{k=1}^{\infty} 
    \sum_{l=1}^{l_0}
    \ind 
    \big\{ 
    l_k = l
    \big\}
    \le 
    \sum_{l=1}^{l_0}
    U_{l}}
    ,
\end{align*}
where the first line is by the inequality we established above; the second one is by $\epsilon>2^{-l_0}$;  the third inequality is by the design of the level set $\cC^l$ and the last inequality is due to Lemma~\ref{lemma:bounded-set}.

From Lemma~\ref{lemma:bounded-set}, we can further bound $\sum_{l=1}^{l_0}U_{l} $ by
\begin{align*}
    & \sum_{l=1}^{l_0}
    128 d_K \mathrm{dim}_{E}( 2^{-l} ) 4^{l} 
    \log(64 d_K \mathrm{dim}_{E}( 2^{-l} ) 4^{l} / \delta)
    \\
    & \le 
    128 d_K \mathrm{dim}_{E}( \epsilon / 2 )
    \sum_{l=1}^{l_0}
    4^l
    (\log(64 d_K \mathrm{dim}_{E}( \epsilon / 2 )/ \delta) + 2l)
    \\
    & \le 
    128 d_K d_E
    (
    4^{l_0 +1} \log(64 d_K d_E/ \delta)
    +
    4^{l_0  +1} l_0
    )
    \\
    & \le 
    c
    \cdot 
    \frac{d_K d_E}{\epsilon^{2}}
    \log(d_K d_E/ \epsilon \delta),
\end{align*}
where the first line is due to Lemma~\ref{lemma:bounded-set};the second line holds because $2^{-l} \ge 2^{-l_0} \ge \epsilon / 2$ (recall that $d_E = \mathrm{dim}_{E}( \epsilon /2 )$); the third line relies on $\sum_{l=1}^{l_0} 4^{l} < 4^{l_0+1}$ and $\sum_{l=1}^{l_0} l 4^{l} < l_0 4^{l_0+1}$; the last line holds because $2^{-l_0} < \epsilon \le  2^{-(l_0-1)}$, thus $2^{l_0} < 2 \epsilon^{-1}$. $c$ in the last line is some constant. 
\end{proof}

\section{Conclusion}
In this work, we consider online decision making with general functions approximations and proposed two new algorithms for nonlinear bandits and episodic MDPs. With the help of the multi-level partition scheme, our $\mathcal{F}$-UPAC-OFUL algorithm and $\mathcal{F}$-UPAC-VTR algorithm obtain the uniform-PAC guarantee to find the near-optimal policy with the state-of-the-art sample complexity. To the best of our knowledge, these results for the first time show that it is possible to achieve a uniform-PAC guarantee in bandits and RL problems with general function approximation.

\appendix
\section{Proof of Theorem~\ref{thm:MDP}} \label{sec:proof-MDP}

We will use $\cC^l_{k,h}$ to denote the level set $\cC^l$ before assign the tuple $X_{k,h} = (s_{k,h}, a_{k,h}, V_{k,h+1})$ into any level set. $\cB^l_{k,h}$ and $\hat{P}^l_{k,h}$ are defined based on $\cC^l_{k,h}$ accordingly. $l_{k,h}$ denotes the level to which $(k,h)$ is assigned to. Note that $\cF^l_{k,h}$ is defined as:
\begin{align*}
    \cF^l_{k,h}
    & :=
    \bigg \{ 
    f: \cX \rightarrow \RR 
    \bigg | 
    \exists P \in \cB^l_{k,h},
    \text{s.t. } 
    f(s,a,V)
    =
    \int_{s'} 
    P(s'|s,a)
    V(s')
    \bigg \}, 
\end{align*}
where $\cX = \cS \times \cA \times \cB_{\infty}(\cS, H)$ and $\cB_{\infty}(\cS, H)$ is all measurable real-valued functions on $\cS$ that are bounded by $H$.

Similar as in the bandit case, the proof relies on two facts: 1) each level consists finite number of steps; 2) the true transition probability belongs to the confidence sets with high probability.

\begin{lemma} \label{lemma:bounded-set-MDP}
If we set $\alpha = U_{l}^{-1}$ for $
    \beta^{l}_t
    := 
    2 H^2 \log(\cN(\cF, \alpha, \|\cdot\|_{\infty}) / \delta) 
    +
    2 \alpha t( 8 H + \sqrt{2 H^2 \log(4t^2/  2^{-l} \delta)})
    $,
where $U_l$ is defined via the equality $
    U_l
    = 
    64 d_K d_{E} 4^l \log( U_{l} / \delta) $.
Then for each level $l > 0$ and each episode $k > 0$ and time step $h > 0$, the total number of actions in the set $\cC_{k,h}^l$ is bounded as
\begin{align*}
    | \cC_k^l |
    & < 
    U_{l},
\end{align*}
which also means
\begin{align*}
    | \cC_k^l |
    & < 
    128 d_K d_{E} 4^l \log( 64  d_K d_{E} 4^l / \delta).
\end{align*}
Here, $d_E = \mathrm{dim}_E(\cF, H2^{-l})$.
\end{lemma}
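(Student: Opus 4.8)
The plan is to mirror the bandit argument behind Lemma~\ref{lemma:bounded-set}, with two modifications: every $f \in \cF$ is now bounded by $H$ rather than $1$, and the width threshold is $\gamma := H 2^{-l}$ rather than $2^{-l}$. First I would recast the confidence ball $\cB^l_{k,h}$ over transition models as a confidence set over the induced function class $\cF = \{ f_P(s,a,V) = \la P(\cdot|s,a), V(\cdot)\ra \}$, which is already introduced above. Because the loss $\cL_{\cC^l_{k,h}}$ sees a model $P$ only through the inner products $\la P(\cdot|s,a), V_{k,h+1}\ra$, the model-width $w_{\cB^l_{k,h}}(X_{k,h})$ coincides with the width of the induced confidence set $\cF^l_{k,h}$ evaluated at $X_{k,h}$. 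By the while-loop in Lines~\ref{algline:MDP-assign-2}--\ref{algline:MDP-assign-6}, every index $(k,h)$ that is placed in level $l \le S$ satisfies $w_{\cB^l_{k,h}}(X_{k,h}) > H 2^{-l} = \gamma$ at the moment of insertion, so it suffices to cap the number of large-width insertions.

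The heart of the argument is a width-to-eluder counting lemma in the style of \citet{russo2013eluder} and \citet{ayoub2020model}: if a sequence of points each has width larger than $\gamma$ with respect to the confidence set built from its predecessors, then the sequence has length at most $(4\beta^l/\gamma^2 + 1)\,\mathrm{dim}_E(\cF, \gamma)$. To establish this I would note that a large width at $X_{k,h}$ exhibits two members of $\cF^l_{k,h}$, say $f_1$ and $f_2$, with $|f_1(X_{k,h}) - f_2(X_{k,h})| > \gamma$; since both induce models lying in the ball of radius $\beta^l$ about $\hat P^l_{k,h}$, the triangle inequality for $\cL_{\cC^l_{k,h}}$ gives aggregated squared disagreement at most $4\beta^l$ over the predecessors. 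This pair certifies that $X_{k,h}$ is $\gamma$-independent of every block of predecessors on which the aggregated squared disagreement of $f_1, f_2$ is at most $\gamma^2$; a pigeonhole over the at most $4\beta^l/\gamma^2$ such blocks, together with the definition of $\mathrm{dim}_E(\cF,\gamma)$, then bounds the number of large-width insertions.

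Finally I would resolve the self-reference created by $\alpha = U_l^{-1}$ and the dependence of $\beta^l_t$ on $t = |\cC^l_{k,h}|$. Substituting $\log \cN(\cF,\alpha,\|\cdot\|_\infty) \le d_K \log(1/\alpha) = d_K \log U_l$ and $\gamma^2 = H^2 4^{-l}$ into the counting bound, the leading term becomes $\tfrac{4^{l+1}}{H^2}\cdot 2H^2 d_K \log(U_l/\delta)\cdot d_E = 8\cdot 4^l d_K d_E \log(U_l/\delta)$, while the choice $\alpha = U_l^{-1}$ is precisely what keeps the discretization term $2\alpha t\,(8H + \sqrt{2H^2\log(4t^2 2^l/\delta)})$ an $O(H)$ quantity for all $t \le U_l$, so that it is absorbed into the constant. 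Comparing against the defining equation $U_l = 64 d_K d_E 4^l \log(U_l/\delta)$ shows that assuming $|\cC^l_{k,h}| \ge U_l$ would force the counting bound to be strictly below $U_l$, a contradiction; hence $|\cC^l_{k,h}| < U_l$. The explicit estimate $|\cC^l_{k,h}| < 128 d_K d_E 4^l \log(64 d_K d_E 4^l/\delta)$ then follows from the elementary fact that any solution of $x = a\log(x/\delta)$ obeys $x < 2a\log(a/\delta)$ with $a = 64 d_K d_E 4^l$. The main obstacle I expect is the bookkeeping in this bootstrap: one must verify that the $t$-dependent discretization term truly remains lower-order across the whole range $t \le U_l$ under the single choice $\alpha = U_l^{-1}$, since otherwise the fixed-point equation for $U_l$ would fail to close.
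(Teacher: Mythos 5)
Your proposal is correct and follows essentially the same route as the paper: reduce to the induced function class, invoke the Russo--Van Roy width-to-eluder counting bound (Proposition 3 of \citet{russo2013eluder}, which the paper cites as Lemma~\ref{lemma:prop3-russo} rather than reproving as you sketch), and close the fixed-point argument by contradiction with the defining equation for $U_l$, noting that at $t = U_l$ the discretization term $2\alpha t(\cdots)$ collapses to an $O(H\sqrt{\log})$ quantity dominated by the $2H^2 d_K \log(U_l/\delta)$ leading term. The bookkeeping concern you raise at the end is resolved exactly as you suspect, since the contradiction only needs $\beta^l_t$ evaluated at the single point $t = U_l$ where $2\alpha t = 2$.
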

\begin{proof}
See Section~\ref{subsec:proof-bounded-set-MDP}.
\end{proof}

The next lemma shows the designed confidence set will contain the true transition probability. The proof of this lemma will follow \citet{ayoub2020model}, while only differs in the definition of the filtration for each level set.
\begin{lemma} \label{lemma:confidence-set-MDP}
[A restatement of Theorem 5 in \citet{ayoub2020model}]
With probability $1-2\delta$, we have for all $k,h >0$ and $l \in S_{k,h}$, 
\begin{align*}
    P^*
    \in 
    \cB_{k,h}^{l}.
\end{align*}
\end{lemma}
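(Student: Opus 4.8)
The plan is to prove the statement exactly as a per-level adaptation of the nonlinear least-squares confidence-set bound of \citet{ayoub2020model} (their Theorem~5, itself based on \citet{russo2013eluder}). The only genuinely new ingredient, as the paper flags, is that instead of one filtration running over all episode-steps we must run a \emph{separate} self-normalized martingale argument inside each level set $\cB^l$ and then union bound over levels. First I would fix a level $l$, order the tuples inserted into $\cC^l$ by insertion time as $X_1,X_2,\dots$ with realized next-state targets $y_j$, and define a per-level filtration $\mathcal{G}_{l,j}$ containing all randomness revealed up to and including $y_j$. The crucial structural observations are: (i) each $V_{k,h+1}$ is computed in the planning phase (Line~\ref{algline:MDP-value-func}), \emph{before} any transition of episode $k$ is sampled, so $X_{k,h}=(s_{k,h},a_{k,h},V_{k,h+1})$ is determined before $s_{k,h+1}$ is drawn; and (ii) the width test $w_{\cB^l}(X_{k,h})\le H2^{-l}$ in Line~\ref{algline:MDP-assign-3} uses only the $\cB^l$ built from tuples inserted \emph{earlier} than $(k,h)$ together with $X_{k,h}$ itself, so the event ``$(k,h)$ is placed in level $l$'' is $\mathcal{G}_{l,j-1}$-measurable given $X_j$. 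Hence $\{X_j\}$ is predictable and the residuals $\eta_j := y_j - \la P^*(\cdot|s_{k,h},a_{k,h}),V_{k,h+1}\ra = y_j - f_{P^*}(X_j)$ satisfy $\EE[\eta_j\mid\mathcal{G}_{l,j-1}]=0$ with $|\eta_j|\le H$ (so each is $H$-sub-Gaussian) along the level-$l$ sequence.

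Next I would invoke optimality of the least-squares estimator: since $\hat P^l$ minimizes $\sum_j (f_P(X_j)-y_j)^2$ over $P\in\cP$ and $P^*\in\cP$, substituting $y_j=f_{P^*}(X_j)+\eta_j$ yields the standard basic inequality
\[
  \cL_{\cC^l}(P^*,\hat P^l)=\sum_j\big(f_{\hat P^l}(X_j)-f_{P^*}(X_j)\big)^2
  \le 2\sum_j \eta_j\big(f_{\hat P^l}(X_j)-f_{P^*}(X_j)\big).
\]
To control the right-hand side uniformly over $P\in\cP$, I would take an $\alpha$-cover $\cG$ of $\cF$ in $\|\cdot\|_{\infty}$ of size $\cN(\cF,\alpha,\|\cdot\|_{\infty})$; for each fixed cover element $g$ form the exponential supermartingale $\exp\!\big(\lambda\sum_{j\le m}\eta_j g(X_j)-\tfrac{\lambda^2H^2}{2}\sum_{j\le m}g(X_j)^2\big)$, which has expectation $\le 1$ by the conditional $H$-sub-Gaussianity established above, and apply a maximal inequality to obtain the self-normalized tail for $\sum_j\eta_j g(X_j)$. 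Then I would union bound over the cover (cost $\log\cN$, giving the $2H^2\log(\cN/\delta)$ term), over the round index $t=|\cC^l|$ (cost $\log(4t^2/\cdot)$, via the summable $t^{-2}$ choice), and over levels by allocating failure probability $\delta\,2^{-l}$ to level $l$ --- exactly the $2^{-l}$ appearing inside the log of $\beta^l_t$, with $\sum_l\delta 2^{-l}=\delta$. Finally, replacing $f_{\hat P^l}$ by its nearest cover element perturbs each summand by $O(\alpha)$ and, carried through the self-normalized bound, produces the discretization term $2\alpha t\big(8H+\sqrt{2H^2\log(4t^2/2^{-l}\delta)}\big)$. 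Combining these gives $\cL_{\cC^l}(P^*,\hat P^l)\le\beta^l_{|\cC^l|}$, i.e.\ $P^*\in\cB^l_{k,h}$, on the intersection of the events.

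The factor $2$ in the probability $1-2\delta$ arises from union-bounding two concentration events: one for the cross term $\sum_j\eta_j g(X_j)$ over cover elements (the $\sqrt{\log\cN}$/self-normalized part), and an auxiliary one controlling the interaction of the noise with the covering perturbation (the $\sqrt{2H^2\log(4t^2/2^{-l}\delta)}$ part of the discretization term). Uniformity over $(k,h)$ within a level is subsumed by uniformity over $t=|\cC^l|$, handled by the round-union; uniformity over levels is handled by the $2^{-l}$ allocation; so the bound holds simultaneously for all $k,h>0$ and all $l\in S_{k,h}$.

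I expect the main obstacle to be the first step: rigorously arguing that the width-based, data-dependent level-assignment rule does not break the martingale-difference structure inside each $\cC^l$. Everything downstream --- the basic inequality, the supermartingale tail, and the covering/discretization bookkeeping that reproduces $\beta^l_t$ --- is a by-now-standard computation essentially identical to \citet{ayoub2020model}. The novelty and the place where care is required is in defining $\mathcal{G}_{l,j}$ so that the selection into level $l$ is predictable while the residual $\eta_j$ remains conditionally zero-mean and bounded; this hinges precisely on the facts that $V_{k,h+1}$ is fixed at planning time and that the width test only consults earlier-inserted tuples. Once this predictability is secured, the per-level self-normalized bound together with the union over levels closes the proof.
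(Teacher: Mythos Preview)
Your proposal is correct and follows the same approach the paper indicates: the paper's own proof is simply a pointer to Theorem~5 of \citet{ayoub2020model}, noting only that ``the martingale design will be multiplied by the indicator $\ind\{l_{k,h}=l\}$ for some fixed $l$.'' Your elaboration of the cover/self-normalized/discretization bookkeeping that reproduces $\beta^l_t$, and the $\delta 2^{-l}$ allocation over levels, is exactly what the paper does in the analogous bandit proof (Lemma~\ref{lemma:confidence-set}, Section~\ref{subsec:proof-confidence-set}).

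The one methodological difference worth noting: you set up a \emph{per-level reindexed} filtration $\mathcal{G}_{l,j}$ over the sequence of tuples actually inserted into $\cC^l$, whereas the paper (see the bandit proof) keeps the \emph{natural} filtration $\cH_{k-1}$ over all $(k,h)$ and simply defines $Z_{k,h}:=\ind\{l_{k,h}=l\}\cdot[\dots]$. The indicator formulation is slightly cleaner precisely at the point you flag as the ``main obstacle'': with it, predictability reduces to the single observation that $\ind\{l_{k,h}=l\}$ is $\cH_{k,h-1}$-measurable (your points (i) and (ii) establish exactly this), and no stopping-time or optional-sampling reasoning is needed to make sense of ``the $j$-th element of $\cC^l$.'' Your reindexed formulation is also valid, but the filtration $\mathcal{G}_{l,j-1}$ must be taken to contain \emph{all} randomness up to the moment the $j$-th level-$l$ tuple is revealed (including outcomes of intervening tuples assigned to other levels), not just the level-$l$ data; otherwise $X_j$ is not $\mathcal{G}_{l,j-1}$-measurable. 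With that caveat, the two formalizations are equivalent.
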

\begin{proof}
The proof will be almost same as in \citet{ayoub2020model}, except the martingale design will be multiplied by the indicator $\ind \{ l_{k,h} = l\}$ for some fixed $l$. Since the proof in \citet{ayoub2020model}, also follows that in \citet{russo2013eluder}, we avoid repeating the proof here.
\end{proof}

The next lemma decomposes the pseudo-regret of an episode.
\begin{lemma}[Lemma 4 in \citet{ayoub2020model}] \label{lemma:regret-decomp}
Assuming $P^* \in \cB^{l}$, we have
\begin{align*}
    V_1^*(s_{k,1}) - V^{\pi_k}_{k,1}(s_{k,1})
    & \le 
    \sum_{h=1}^{H-1}
    \big \la 
    P_{k}
    (\cdot| s_{k,h}, a_{k,h})
    -
    P^*
    (\cdot| s_{k,h}, a_{k,h})
    ,
    V_{k,h+1}
    \big \ra 
    +
    \sum_{h=1}^{H-1}
    \xi_{k,h+1},
\end{align*}
where
\begin{align*}
    \xi_{k,h+1} 
    & :=
    \big \la 
    P^*
    (\cdot| s_{k,h}, a_{k,h})
    ,
    V_{k,h+1}( \cdot)
    -
    V^{\pi_k}_{k,h+1}( \cdot)
    \big \ra 
    -
    (V_{k,h+1}( s_{k,h+1})
    -
    V^{\pi_k}_{k,h+1}( s_{k,h+1})).
\end{align*}
Notice that for any given sub-sequence of episodes $k_1, k_2, \dots, k_m$,
$\big[(\xi_{k_i,h+1})_{h \in [H-1]} \big]_{i \in [m]} $ forms a sequence of martingale differences.
\end{lemma}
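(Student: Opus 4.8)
The plan is to combine an \emph{optimism} step with a one-step \emph{value-difference} (simulation) telescoping, following the structure of Lemma~4 in \citet{ayoub2020model} but adapted to our notation. First I would reduce the claim to a statement about the optimistic model. By construction, the functions $V_{k,h}, Q_{k,h}$ in~\eqref{eq:value-functions} satisfy the Bellman optimality recursion under the chosen model $P_k$, so $V_{k,1}(s_{k,1}) = V^{*,P_k}_1(s_{k,1})$. Since $P^*$ lies in every confidence set $\cB^l$ (the standing hypothesis, guaranteed by Lemma~\ref{lemma:confidence-set-MDP}), it belongs to $\bigcap_{l\in[S]}\cB^l$, the set over which $P_k$ is selected to maximize $V^{*,P}_1(s_{k,1})$ at Line~\ref{algline:MDP-planning-2}. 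Hence $V_{k,1}(s_{k,1}) = V^{*,P_k}_1(s_{k,1}) \ge V^{*,P^*}_1(s_{k,1}) = V_1^*(s_{k,1})$, and it suffices to bound the gap $V_{k,1}(s_{k,1}) - V^{\pi_k}_{k,1}(s_{k,1})$, after which optimism upgrades the final equality to the stated $\le$.

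Next I would set up the per-step recursion. Write the value gap at step $h$ as $D_h(s) := V_{k,h}(s) - V^{\pi_k}_{k,h}(s)$. Because $\pi_k$ is greedy with respect to $Q_{k,h}$, the function $V_{k,h}$ is also the value of $\pi_k$ under $P_k$, so both $V_{k,h}$ and $V^{\pi_k}_{k,h}$ obey Bellman expectation equations at $(s_{k,h},a_{k,h})$, differing only in the transition kernel ($P_k$ versus $P^*$). Subtracting them, the shared reward cancels and inserting the intermediate term $\la P^*(\cdot|s_{k,h},a_{k,h}),V_{k,h+1}\ra$ splits the gap into the model-error term $\la P_k(\cdot|s_{k,h},a_{k,h})-P^*(\cdot|s_{k,h},a_{k,h}),V_{k,h+1}\ra$ and the propagated term $\la P^*(\cdot|s_{k,h},a_{k,h}),V_{k,h+1}-V^{\pi_k}_{k,h+1}\ra$.

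I would then replace that conditional expectation by its realized value plus the remainder defined in the statement, i.e. $\la P^*(\cdot|s_{k,h},a_{k,h}),V_{k,h+1}-V^{\pi_k}_{k,h+1}\ra = D_{h+1}(s_{k,h+1}) + \xi_{k,h+1}$, which gives the recursion $D_h(s_{k,h}) = \la P_k - P^*, V_{k,h+1}\ra + D_{h+1}(s_{k,h+1}) + \xi_{k,h+1}$. Iterating from $h=1$ to $H-1$ telescopes the $D_{h+1}(s_{k,h+1})$ terms, and the boundary term vanishes: at the last step $V_{k,H}(s_{k,H}) = V^{\pi_k}_{k,H}(s_{k,H}) = r(s_{k,H},a_{k,H})$ since $V_{k,H+1}\equiv 0$, so $D_H(s_{k,H})=0$. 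This produces exactly the claimed decomposition. Finally, for the martingale property, I would condition on the $\sigma$-field $\mathcal{G}_{k,h}$ generated by everything up to and including the action $a_{k,h}$: the model $P_k$ is frozen at the start of episode $k$, so $V_{k,h+1}$ and $V^{\pi_k}_{k,h+1}$ are $\mathcal{G}_{k,h}$-measurable, while the only fresh randomness is $s_{k,h+1}\sim P^*(\cdot|s_{k,h},a_{k,h})$; by the definition of the inner product as an integral against $P^*$, $\EE[\xi_{k,h+1}\mid\mathcal{G}_{k,h}]=0$.

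The decomposition and telescoping are routine; the one delicate point is the martingale-difference claim for an \emph{arbitrary} sub-sequence $k_1,\dots,k_m$. The hard part will be arguing that the selection of episodes—which in our application arises from the level assignment into the sets $\cC^l$—is \emph{predictable} with respect to the per-step filtration. Concretely, the level to which $(k,h)$ is assigned is determined by the width $w_{\cB^l}(X_{k,h})$, where $X_{k,h}=(s_{k,h},a_{k,h},V_{k,h+1})$ and the confidence sets $\cB^l$ are formed before inserting $(k,h)$; all of these are $\mathcal{G}_{k,h}$-measurable and in particular do not anticipate the realized next state $s_{k,h+1}$. Once this measurability bookkeeping is in place, thinning the index set preserves the zero-conditional-mean property and the sub-sequence claim follows.
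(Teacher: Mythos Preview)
Your argument is correct and is precisely the standard optimism-plus-simulation-lemma decomposition that underlies Lemma~4 of \citet{ayoub2020model}; the present paper does not give its own proof and simply cites that result, so there is nothing further to compare. One small inaccuracy worth flagging: in the last paragraph you identify the relevant sub-sequence as arising from the level assignment into $\cC^l$, but in the paper's application (Lemma~\ref{lemma:martingale-bounded}) the episodes $k_1,\dots,k_m$ are selected by the threshold $V_1^*(s_{k,1})-V^{\pi_k}_{k,1}(s_{k,1})>H2^{-L}$, not by the level sets; nonetheless your predictability reasoning transfers directly, since the pseudo-regret of episode $k$ is a deterministic function of $(s_{k,1},\pi_k)$ and hence is measurable with respect to the filtration at the start of episode $k$, before any $\xi_{k,h+1}$ are realized.
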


The following lemma is simply an application of the Azuma's inequality.
\begin{lemma} \label{lemma:martingale-bounded}
For a fixed $l > 0$, denote $k_1, k_2, \dots, k_m$ as the indices of the episodes whose pseudo-regret is above $2^{-l}$ so far.  
With probability $1-\delta / 2^l$, we have 
\begin{align*}
    \sum_{i=1}^{m}
    \sum_{h=1}^{H-1}
    \xi_{k_i,h+1}
    & \le 
    2H \sqrt{2mH \log(2^l \delta^{-1})}.
\end{align*}
By a union bound, we have the above inequality for all $l$ with probability $1-\delta$.
\end{lemma}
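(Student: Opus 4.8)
The plan is to read this off as a direct application of the one-sided Azuma--Hoeffding inequality to the martingale difference sequence already identified in Lemma~\ref{lemma:regret-decomp}. First I would establish a uniform bound on each increment. Since every value function in the setup lies in $[0,H]$, both the pointwise difference $V_{k,h+1}(\cdot) - V^{\pi_k}_{k,h+1}(\cdot)$ and its conditional expectation $\langle P^*(\cdot|s_{k,h},a_{k,h}), V_{k,h+1}(\cdot) - V^{\pi_k}_{k,h+1}(\cdot)\rangle$ lie in $[-H,H]$; hence $\xi_{k,h+1}$, being their difference, satisfies $|\xi_{k,h+1}| \le 2H$ almost surely.

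Next I would assemble the quadratic variation and optimize the deviation parameter. For the fixed level $l$ the selected subsequence consists of $m$ episodes, each contributing the $H-1$ increments $\xi_{k_i,h+1}$, $h \in [H-1]$. By Lemma~\ref{lemma:regret-decomp} these $m(H-1)$ terms form a martingale difference sequence, so for any $t>0$ the Azuma--Hoeffding bound gives failure probability at most $\exp(-t^2/(2\sum c_i^2))$, where $\sum c_i^2 \le m(H-1)(2H)^2 \le 4mH^3$ and thus $2\sum c_i^2 \le 8mH^3$. Setting $\exp(-t^2/(8mH^3)) = \delta/2^l$ and solving yields $t = \sqrt{8mH^3\log(2^l\delta^{-1})} = 2H\sqrt{2mH\log(2^l\delta^{-1})}$, exactly the claimed bound. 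A union bound over all levels $l \in \NN$ then costs $\sum_{l=1}^\infty \delta/2^l = \delta$, delivering the simultaneous-in-$l$ statement.

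The one point requiring genuine care — and the part I expect to be the real obstacle rather than the arithmetic — is that the subsequence $k_1,\dots,k_m$ is chosen \emph{adaptively}: it is precisely the set of episodes whose pseudo-regret exceeds $2^{-l}$, so both its membership and its length $m$ are data-dependent random quantities. To legitimize Azuma here I would exploit that the selection criterion is predictable, namely that $\Delta_k = V_1^*(s_{k,1}) - V_1^{\pi_k}(s_{k,1})$ is determined at the start of episode $k$, before any increment $\xi_{k,h+1}$ of that episode is realized. Consequently $\ind\{\Delta_k > 2^{-l}\}$ is measurable with respect to the filtration preceding episode $k$, and multiplying each increment by this predictable indicator preserves both the martingale difference property and the $2H$ bound; this is exactly what the phrase ``for any given sub-sequence'' in Lemma~\ref{lemma:regret-decomp} is meant to cover. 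I would therefore apply the inequality to the indicator-weighted process, using a stopped-martingale formulation (or equivalently absorbing the selection into predictable multipliers) so that the randomness of $m$ is handled cleanly and the final bound is expressed in terms of the realized count $m$ without incurring an extra logarithmic factor in $m$.
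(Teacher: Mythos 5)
Your proposal is correct and matches the paper's intent exactly: the paper gives no written proof for this lemma beyond declaring it ``simply an application of the Azuma's inequality,'' and your calculation (increments bounded by $2H$, variance proxy $\le 4mH^3$, solving $\exp(-t^2/(8mH^3))=\delta/2^l$, then a union bound over $l$ costing $\sum_l \delta/2^l=\delta$) reproduces the stated constant precisely. Your discussion of the predictability of the selection indicator $\ind\{\Delta_k>2^{-l}\}$ and the randomness of $m$ is in fact more careful than anything in the paper, which silently elides that issue.
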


\begin{proof}[Proof of Theorem~\ref{thm:MDP}]
First, for any given $\epsilon > 0$, we choose $L$ such that $H 2^{-L} < \epsilon  \le  H 2^{-(L-1)}$. We will bound the number of episodes that has pseudo-regret above $H 2^{-L}$. This scheme will cover all $\epsilon > 0$ since $V_1^*(s_{k,1}) - V^{\pi_{k}}_{k,1}(s_{k,1}) < H$.

At the end of any certain episode, we use $k_1, k_2, \dots, k_m$ to denote the indices of the episodes whose pseudo-regret is above $H2^{-L}$. 

Under the event of Lemma~\ref{lemma:confidence-set-MDP}, 
we have
\begin{align} \label{eqn:regret-inequality}
    m H 2^{-L}
    & \le 
    \sum_{i=1}^{m}
    \big(
    V_1^*(s_{k_i,1}) - V^{\pi_{k_i}}_{k_i,1}(s_{k_i,1})
    \big)
    \notag \\
    & \le 
    \sum_{i=1}^{m}
    \sum_{h=1}^{H-1}
    \big \la 
    P_{k_i}
    (\cdot| s_{k_i,h}, a_{k_i,h})
    -
    P^*
    (\cdot| s_{k_i,h}, a_{k_i,h})
    ,
    V_{k_i,h+1}
    \big \ra 
    +
    \sum_{i=1}^{m}
    \sum_{h=1}^{H-1}
    \xi_{k_i,h+1}
    \notag \\
    & \le 
    \sum_{i=1}^{m}
    \sum_{h=1}^{H-1}
    \bigg[ 
    \sup_{P \in \cB^{l_{k,h}}_{k,h}}
    \big \la 
    P
    (\cdot| s_{k_i,h}, a_{k_i,h})
    ,
    V_{k_i,h+1}
    \big \ra 
    -
    \inf_{P \in \cB^{l_{k,h}}_{k,h}}
    \big \la 
    P
    (\cdot| s_{k_i,h}, a_{k_i,h})
    ,
    V_{k_i,h+1}
    \big \ra
    \bigg]
    +
    \sum_{i=1}^{m}
    \sum_{h=1}^{H-1}
    \xi_{k_i,h+1}
    \notag \\
    & =
    \sum_{i=1}^{m}
    \sum_{h=1}^{H-1}
    w_{\cB^{l_{k,h}}_{k,h}}(X_{k,h})
    +
    \sum_{i=1}^{m}
    \sum_{h=1}^{H-1}
    \xi_{k_i,h+1}
\end{align}
where the second inequality holds due to Lemma~\ref{lemma:regret-decomp}; the third holds under the event of Lemma~\ref{lemma:confidence-set-MDP}. 

Under the event of Lemma~\ref{lemma:martingale-bounded}, we have
\begin{align} \label{eqn:martingale-bound}
    \sum_{i=1}^{m}
    \sum_{h=1}^{H-1}
    \xi_{k_i,h+1}
    & \le 
    2H \sqrt{2mH \log(2^{L} \delta^{-1})}.
\end{align}

Meanwhile, denote $l_0$ such that $H 2^{- l_0} < 2^{-L}/ 2 \le H 2^{- (l_0 - 1)}$ (for simplicity we denote $w_{k,h} = w_{\cB^{l_{k,h}}_{k,h}}(X_{k,h})$)
\begin{align*}
    \sum_{i=1}^{m}
    \sum_{h=1}^{H-1}
    w_{k,h}
    & \le 
    \sum_{i=1}^{m}
    \sum_{h=1}^{H-1}
    \bigg[ 
    \ind \bigg\{
    w_{k,h} > \frac{2^{-L}}{2} 
    \bigg\}
    w_{k,h}
    +
    \frac{2^{-L}}{2}
    \bigg] 
    \notag \\
    & \le 
    \sum_{i=1}^{m}
    \sum_{h=1}^{H-1}
    \ind \bigg\{
    w_{k,h} > \frac{2^{-L}}{2} 
    \bigg\}
    w_{k,h}
    +
    \frac{m H 2^{-L}}{2}
    \notag \\
    & \le 
    \sum_{i=1}^{m}
    \sum_{h=1}^{H-1}
    \ind \{
    l_{k,h} \le  l_0
    \}
    w_{k,h}
    +
    \frac{m H 2^{-L}}{2}
    \notag \\
    & \le 
    \sum_{i=1}^{m}
    \sum_{h=1}^{H-1}
    \ind \{
    l_{k,h} \le  l_0
    \}
    2^{-(l_{k,h} - 1)}
    +
    \frac{m H 2^{-L}}{2},
\end{align*}
where the first inequality is from splitting the case where $w_{k,h} > {2^{-L}}/{2} $ and $w_{k,h} \le {2^{-L}}/{2} $; the third holds because $w_{k,h} > {2^{-L}}/{2} > H 2^{-l_0}$ implies $w_{k,h}$ belongs to the level equal to or lower than $l_0$. 

For some constant $c'$, we have the first term further bounded as:
\begin{align*}
    \sum_{i=1}^{m}
    \sum_{h=1}^{H-1}
    \ind \bigg\{
    l_{k,h} \le  l_0
    \bigg\}
    2^{-(l_{k,h} - 1)}
    & \le 
    \sum_{l=1}^{l_0}
    \sum_{i=1}^{m}
    \sum_{h=1}^{H-1}
    \ind \{
    l_{k,h} =  l
    \}
    2^{-(l - 1)}
    \\
    & =
    \sum_{l=1}^{l_0}
    |\cC_{k,h}^{l}| 2^{-(l - 1)}
    \\
    & \le 
    \sum_{l=1}^{l_0}
    256 d_K \mathrm{dim}_{E}(H 2^{-l}) 2^l \log( 64  d_K d_{E} 4^l / \delta) 
    \\
    & \le 
    \sum_{l=1}^{l_0}
    256 d_K \mathrm{dim}_{E}( 2^{-L} / 4) 2^l \log( 64  d_K d_{E} 4^l / \delta) 
    \\
    & \le 
    256  d_K d_{E} 
    (2^{l_0 + 1} \log( 64  d_K d_{E} / \delta) + 2 l_0 2^{l_0 + 1} )
    \\
    & \le 
    256  d_K d_{E} 
    (8 \cdot 2^{L} H \log( 64  d_K d_{E} / \delta) + 32 \cdot  2^{L} H \log(2^L H)  )
    \\
    & \le 
    c' \cdot 
    H d_K d_{E} 
    2^{L} 
    \log( H d_K d_{E} 2^L / \delta),
\end{align*}
where the third line is due to Lemma~\ref{lemma:bounded-set-MDP}; the fourth line holds because $\mathrm{dim}_E(\cF, \epsilon)$ is decreasing with $\epsilon$ and $H 2^{-l_0} \ge 2^{-L} / 4$; the fifth relies on $\sum_{l=1}^{l_0} 2^{l} < 2^{l_0+1}$ and $\sum_{l=1}^{l_0} l 2^{l} < l_0 2^{l_0+1}$; the sixth is a substitution of $l_0$ by $L$.

Together we have
\begin{align} \label{eqn:bounding-radius}
    \sum_{i=1}^{m}
    \sum_{h=1}^{H-1}
    w_{\cB^{l_{k,h}}_{k,h}}(X_{k,h})
    & \le 
    c' \cdot 
    H d_K d_{E} 
    2^{L} 
    \log( H d_K d_{E} 2^L / \delta)
    +
    \frac{m H 2^{-L}}{2}.
\end{align}
Along with \eqref{eqn:regret-inequality} and \eqref{eqn:martingale-bound}, we have 
\begin{align*}
    m H 2^{-L}
    \le 
    \frac{m H 2^{-L}}{2}
    +
    c' \cdot 
    H d_K d_{E} 
    2^{L} 
    \log( H d_K d_{E} 2^L / \delta)
    +
    2H \sqrt{2mH \log(2^{L} \delta^{-1})},
\end{align*}
Since $m \le A + \sqrt{Bm}$ implies $m \le 2A + 2B$, we have
\begin{align*}
    m 
    & \le 
    4 c'  d_K d_{E} 
    4^{L} 
    \log( H d_K d_{E} 2^L / \delta)
    +
    64 H 4^L \log(2^L / \delta)
    \\
    & \le 
    (4 c'+64) H d_K d_{E} 
    4^{L} 
    \log( H d_K d_{E} 2^L / \delta).
\end{align*}

Finally, since $H 2^{-L} < \epsilon  \le  H 2^{-(L-1)}$, we have $\mathrm{dim}_{E}( 2^{-L} / 4) < \mathrm{dim}_{E}( \epsilon / 8H )$ and 
\begin{align*}
    \sum_{k=1}^\infty
    \ind
    \bigg\{ 
    V_1^*(s_{k,1})
    -  
    V^{\pi_k}_{k,1}(s_{k,1})
    >
    \epsilon
    \bigg\}
    &\le 
    (4 c'+64) H d_K d_{E} 
    4^{L} 
    \log( H d_K d_{E} 2^L / \delta)
    \\
    & \le 
    c \cdot 
    \frac{H^3 d_K d_{E} \log( H^2 d_K d_{E} / \epsilon \delta)}{\epsilon^2},
\end{align*}
for some constant $c$, and $d_E = \mathrm{dim}_{E}( \epsilon / 8H )$. 
This inequality holds for all $\epsilon > 0$ uniformly with probability $1-3 \delta$.
\end{proof}

\section{Proof of Lemmas in Section \ref{sec:proof-bandit}}
\subsection{Proof of Lemma \ref{lemma:bounded-set}} \label{subsec:proof-bounded-set}
The proof relies on Proposition 3 in \citet{russo2013eluder}. A restatement is as follows:
\begin{lemma}[Proposition 3 in \citet{russo2013eluder}] \label{lemma:prop3-russo}
Let $\{ \xb_t \}_{t\in[T]}$ denote a series of actions for some $T > 0$.
If $\{\beta_t\}_{t > 0}$ is a non-decreasing series and $\cF_t = \{ f \in \cF | \cL_{\cC_t}(f, \hat{f}_{t}) \le \beta_{t}  \}$, then 
\begin{align*}
    \sum_{t=1}^{T} \ind \{ w_{\cF_t}(\xb_t) > \epsilon\} 
    & \le 
    \bigg(
    \frac{\beta_{T}}{\epsilon^2}
    +
    1
    \bigg)
    \mathrm{dim}_{E}(\cF, \epsilon).
\end{align*}
\end{lemma}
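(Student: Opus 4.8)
The plan is to reproduce the standard two-step counting argument behind eluder-dimension bounds, which reduces the count $\sum_{t=1}^{T}\ind\{w_{\cF_t}(\xb_t)>\epsilon\}$ to the product of two quantities: how many disjoint ``informative'' subsequences a single wide action can be $\epsilon$-dependent on, and the longest possible sequence in which every element is $\epsilon$-independent of its predecessors, which is exactly $\mathrm{dim}_E(\cF,\epsilon)$. First I would translate a large width into an $\epsilon$-dependence statement. If $w_{\cF_t}(\xb_t)>\epsilon$, then by definition of the width there exist $f_1,f_2\in\cF_t$ with $f_1(\xb_t)-f_2(\xb_t)>\epsilon$; since both lie in the confidence set, the triangle inequality for the empirical $\ell_2$ seminorm gives $\sum_{k\in\cC_t}(f_1(\xb_k)-f_2(\xb_k))^2\le 4\beta_t\le 4\beta_T$ (up to the absolute constant fixed by the normalization of $\cF_t$). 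Now if $\xb_t$ were $\epsilon$-dependent on $K$ disjoint subsequences $S_1,\dots,S_K\subseteq\cC_t$, then the contrapositive of the definition of $\epsilon$-dependence forces $\sum_{i\in S_j}(f_1(\xb_i)-f_2(\xb_i))^2>\epsilon^2$ for each $j$ (otherwise the gap at $\xb_t$ could not exceed $\epsilon$). Summing over the disjoint $S_j$ and comparing with the global bound yields $K\epsilon^2<4\beta_T$, so every wide action is $\epsilon$-dependent on at most $O(\beta_T/\epsilon^2)$ disjoint subsequences of its predecessors.

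Next I would run a greedy bucketing argument on the subsequence of wide rounds. Let $y_1,\dots,y_N$ denote the actions at the rounds where $w_{\cF_t}(\xb_t)>\epsilon$, in order, and maintain a growing family of buckets, each a subsequence in which every element is $\epsilon$-independent of those already placed in it. When processing $y_j$, insert it into the first bucket in which it is $\epsilon$-independent of the current contents, opening a new bucket if none exists. Two invariants then close the argument. On one hand, each bucket has length at most $d:=\mathrm{dim}_E(\cF,\epsilon)$, since within a bucket every element is $\epsilon$-independent of its predecessors. On the other hand, whenever $y_j$ lands in bucket index $i$, it was $\epsilon$-dependent on each of the $i-1$ earlier, pairwise disjoint buckets, so Step~1 gives $i-1\le O(\beta_T/\epsilon^2)$; hence no bucket index beyond $O(\beta_T/\epsilon^2)+1$ is ever used. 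Multiplying the two invariants gives $N\le\big(\beta_T/\epsilon^2+1\big)\,\mathrm{dim}_E(\cF,\epsilon)$, matching the claim.

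The hard part will be the interface between strict and non-strict independence, which is where I expect the only genuine care to be required. The bucket invariant establishes that each new element is $\epsilon$-independent (i.e.\ independent at the exact threshold $\epsilon$), whereas $\mathrm{dim}_E(\cF,\epsilon)$ bounds sequences in which every element is $\epsilon'$-independent for a \emph{common} $\epsilon'>\epsilon$. I would bridge this by noting that the witnessing gap $f_1(y_j)-f_2(y_j)$ is \emph{strictly} larger than $\epsilon$, so each element is in fact $\epsilon'$-independent for some threshold strictly above $\epsilon$, and taking the minimum over the finitely many elements of a bucket produces a single $\epsilon'>\epsilon$ valid for the whole bucket. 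Everything else is bookkeeping: the monotonicity of $\{\beta_t\}$ is used only to replace $\beta_t$ by $\beta_T$ when passing to the global bound, and the precise constant in Step~1 (whether the denominator carries a factor of $4$ or is absorbed into $\beta_T$) has to be tracked consistently with the definition of $\cF_t$.
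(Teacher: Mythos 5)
Your argument is correct and is essentially the standard proof of Proposition~3 in \citet{russo2013eluder}: step one converts a large width into $\epsilon$-dependence on at most $O(\beta_T/\epsilon^2)$ disjoint subsequences via the triangle inequality in the empirical $\ell_2$ seminorm, and step two is the bucketing/pigeonhole argument against $\mathrm{dim}_E(\cF,\epsilon)$; the paper does not reprove this lemma but imports it by citation, so there is no divergence to report. The one point worth recording is the constant you flagged: with the confidence set defined by $\cL_{\cC_t}(f,\hat{f}_t)\le\beta_t$, the diameter bound is $\sum_{k\in\cC_t}\big(f_1(\xb_k)-f_2(\xb_k)\big)^2\le 4\beta_t$, so the argument (like the original proposition) yields $\big(4\beta_T/\epsilon^2+1\big)\,\mathrm{dim}_E(\cF,\epsilon)$ rather than the constant-free form stated here; this affects only absolute constants downstream, and your handling of the strict-versus-nonstrict independence threshold is the right way to reconcile the bucket invariant with the paper's definition of the eluder dimension.
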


\begin{proof}[Proof of Lemma~\ref{lemma:bounded-set}]
First, note that 
\begin{align*}
    \beta^{l}_t
    & := 
    2 \log(\cN(\cF, \alpha, \|\cdot\|_{\infty}) / \delta) 
    +
    2 \alpha t( 8  + \sqrt{2 \log(4t^2/  2^{-l} \delta)})
    \\
    & =
    2 \log(\cN(\cF, U_{l}^{-1}, \|\cdot\|_{\infty}) / \delta) 
    +
    2 \frac{t}{U_l} ( 8  + \sqrt{2  \log(4t^2/  2^{-l} \delta)})
    \\
    & \le 
    2  d_K \log( U_{l} / \delta) 
    +
    2 \frac{t}{U_l} ( 8  + \sqrt{2  \log(4t^2/  2^{-l} \delta)}).
\end{align*}

Suppose there are $T$ actions stored in $\cC^l_k$ at round $k$, denote them by the index $(k_1, k_2, \dots, k_T)$.
Notice that each action in level $l$ satisfies $2^{-l} < w_{\cF^{l}_{k_i}}(\xb_{k_i}) \le 2^{-l+1}$ for $i \in [T]$.
Setting $\epsilon = 2^{-l}$, we have by Lemma~\ref{lemma:prop3-russo}
\begin{align*}
    T 
    & \le 
    2\mathrm{dim}_{E}(\cF, 2^{-l} ) \beta^l_T
    4^l
    .
\end{align*}

Now we prove by contradiction. If at some round $k'$, $|\cC_{k'}^l| = U_l$, by setting $T = U_l$, we have
\begin{align*}
    U_l 
    &\le 
    2\mathrm{dim}_{E}(\cF, 2^{-l} ) \beta^l_{U_l}
    4^l
    \\
    & \le 
    2 d_{E} 4^l
    \bigg(
    2  d_K \log( U_{l} / \delta) 
    +
    2 ( 8  + \sqrt{2 \log(4U_l^2/  2^{-l} \delta)})
    \bigg) 
    \\
    & < 
    2 d_{E} 4^l
    \bigg(
    2 d_K \log( U_{l} / \delta) 
    +
    2 ( 8 + \sqrt{2 \log(U_l^2/  2^{-l} \delta)})
    \bigg), 
\end{align*}
and meanwhile
\begin{align} \label{eqn:Ul-definition}
    U_l
    & = 
    64 d_K d_{E} 4^l \log( U_{l} / \delta). 
\end{align}

Combining the two together and rearranging terms, we get
\begin{align*}
    15  \log( U_{l} / \delta)
    & <
    8 + \sqrt{2 (2(\log(4U_l/ \delta) + l + 2)}.
\end{align*}
From \eqref{eqn:Ul-definition}, we see $U_l / \delta > 64 d_K d_{E} 4^l / \delta > 64 \cdot 4^l$. Substituting $U_l / \delta$ to the inequality above leads to a contradiction.

The additional conclusion comes from the fact that $T = A \log T$ implies $T < 2A \log A$ if $A > e$.
\end{proof}

\subsection{Proof of Lemma~\ref{lemma:confidence-set}} \label{subsec:proof-confidence-set}
The proof follows \citet{russo2013eluder}. We include this proof mainly for completeness and to show how the level-partition scheme affects the martingale design. 

Consider random variables $(Z_n | n \in \NN)$ adapted to the filtration $(\cH_n | n \ge 0)$. Assume $\EE[ \exp(\lambda Z_i)]$ is finite for any $\lambda$. Define the conditional mean $\mu_i = \EE[Z_i | \cH_{i-1}]$ and the conditional cumulant generating function of the centered random variable $(Z_i - \mu_i)$ by $\psi_i(\lambda) = \log \EE [\exp \big( \lambda(Z_i - \mu_i) \big) | \cH_{i-1}]$.
Then we have
\begin{lemma}[Lemma 4 in \citet{russo2013eluder}] \label{lemma:lemma4russo}
For all $x \ge 0$ amd $\lambda \ge 0$, 
\begin{align*}
    \PP(\sum_{i=1}^{K} \lambda Z_i \le x + \sum_{i=1}^{K}[ \lambda \mu + \psi_i(\lambda)], \forall K \in \NN) \ge 1 - e^{-x}.
\end{align*}
\end{lemma}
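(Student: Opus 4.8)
The plan is to construct the standard exponential (super)martingale from the centered increments $Z_i - \mu_i$ and then invoke a time-uniform maximal inequality (Ville's inequality) to control all $K \in \NN$ at once. For a fixed $\lambda \ge 0$ I would define the process
\[
    M_K = \exp\Big( \sum_{i=1}^K \big[ \lambda(Z_i - \mu_i) - \psi_i(\lambda) \big] \Big), \qquad M_0 = 1,
\]
which is nonnegative, $\cH_K$-measurable, and integrable, the latter because $\EE[\exp(\lambda Z_i)]$ is assumed finite, so each $\psi_i(\lambda)$ is well defined and the one-step moment generating function is finite.

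The first key step is to verify that $(M_K)$ is a martingale with respect to $(\cH_K)$. Conditioning on $\cH_{K-1}$ and using that $\mu_K$ and $\psi_K(\lambda)$ are $\cH_{K-1}$-measurable, I would compute
\[
    \EE[M_K \mid \cH_{K-1}] = M_{K-1}\, e^{-\psi_K(\lambda)}\, \EE\big[ e^{\lambda(Z_K - \mu_K)} \mid \cH_{K-1} \big] = M_{K-1},
\]
since by the very definition of the conditional cumulant generating function $\EE[e^{\lambda(Z_K - \mu_K)} \mid \cH_{K-1}] = e^{\psi_K(\lambda)}$. Hence $(M_K)$ is a nonnegative martingale with $\EE[M_0] = 1$; if one only assumes $\psi_i(\lambda)$ upper bounds the log-MGF, the same computation shows $(M_K)$ is a supermartingale, which already suffices.

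The second step is to apply Ville's maximal inequality for nonnegative supermartingales, which gives $\PP(\sup_{K \in \NN} M_K \ge e^x) \le \EE[M_0]\, e^{-x} = e^{-x}$ for every $x \ge 0$. Taking logarithms, the event $\{ \sup_K M_K < e^x \}$ coincides with the event that $\sum_{i=1}^K \lambda(Z_i - \mu_i) - \psi_i(\lambda) < x$ for all $K$, equivalently $\sum_{i=1}^K \lambda Z_i \le x + \sum_{i=1}^K [\lambda \mu_i + \psi_i(\lambda)]$ for all $K$. Its complement has probability at most $e^{-x}$, which is exactly the claimed bound.

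The main subtlety, and the only delicate point, is the uniformity over all $K \in \NN$ simultaneously: a plain Markov bound applied at a single horizon would control only one $K$, so the crux is that the exponential process is a genuine (super)martingale started at $1$, which is precisely what licenses the time-uniform maximal inequality. The secondary points to check, namely integrability and the $\cH_{K-1}$-measurability of $\mu_K$ and $\psi_K(\lambda)$, both follow immediately from adaptedness together with the finiteness of the moment generating function, so I expect them to be routine.
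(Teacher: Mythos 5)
Your proof is correct: the exponential supermartingale $M_K = \exp\big(\sum_{i=1}^K [\lambda(Z_i-\mu_i)-\psi_i(\lambda)]\big)$ combined with Ville's maximal inequality is exactly the standard argument, and it is the one used in the cited source (Lemma 4 of Russo and Van Roy, 2013); the paper itself imports the lemma without proof. One cosmetic note: the statement's $\lambda\mu$ should read $\lambda\mu_i$, as your derivation correctly assumes.
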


Another lemma regarding the discretization error is:
\begin{lemma}[Lemma 5 in \citet{russo2013eluder}] \label{lemma:lemma5russo}
If $f^{\alpha} $ satisfies $\| f- f^{\alpha} \|_{\infty} \ge \alpha$, then with probability at least $1-\delta$, (denote $t = |\cC^l_k|$ )
\begin{align*}
    \bigg| 
    \frac{1}{2} \cL_{\cC^l_k}(f^{\alpha}, f_{\btheta^*})
    -
    \frac{1}{2} \cL_{\cC^l_k}(f, f_{\btheta^*})
    +
    \sum_{k \in \cC^l_k} \big( f(\xb_k) - R_k \big)^2
    -
    \sum_{k \in \cC^l_k} \big( f^{\alpha}(\xb_k) - R_k \big)^2
    \bigg|
    & \le 
    \alpha t 
    \big[ 
    8
    +
    \sqrt{8 \log (4 t^2 / \delta)}
    \big] , 
\end{align*}
for any given rounds.
\end{lemma}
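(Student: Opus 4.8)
The plan is to expand the difference of squared losses into a deterministic bilinear term plus a noise cross-term, bound the two pieces separately, and control the noise uniformly across all rounds via a sub-Gaussian tail bound with a carefully calibrated union bound. Throughout I would read the hypothesis as $\|f - f^{\alpha}\|_{\infty} \le \alpha$, which is the approximation condition a discretization bound requires. Writing the inner index as $i$ to avoid clashing with the outer round index, setting $t = |\cC^l_k|$, $g_i := f_{\btheta^*}(\xb_i)$, $u_i := f(\xb_i) - g_i$ and $v_i := f^{\alpha}(\xb_i) - g_i$, substituting $R_i = g_i + \eta_i$, and expanding, the first step is to verify that the quantity inside the absolute value collapses to
\begin{align*}
    \tfrac{1}{2}\sum_{i} (u_i - v_i)(u_i + v_i) - 2\sum_i (u_i - v_i)\eta_i .
\end{align*}
Here the $\tfrac{1}{2}\cL_{\cC^l_k}$ terms are exactly $\tfrac12\sum u_i^2$ and $\tfrac12\sum v_i^2$, and the $\eta_i^2$ contributions cancel between $\sum(f(\xb_i)-R_i)^2$ and $\sum(f^{\alpha}(\xb_i)-R_i)^2$.

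For the deterministic term I would use $|u_i - v_i| = |f(\xb_i) - f^{\alpha}(\xb_i)| \le \|f - f^{\alpha}\|_{\infty} \le \alpha$ together with $|u_i + v_i| \le 2$, the latter because every function in $\cF$ and $f_{\btheta^*}$ take values in $[0,1]$. This gives $\big|\tfrac{1}{2}\sum_i (u_i - v_i)(u_i + v_i)\big| \le \alpha t$, comfortably absorbed into the $8\alpha t$ term (the constant $8$ is deliberately loose).

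The noise term is the crux. Because the lemma is later applied with a \emph{data-dependent} function (the least-squares estimator, or $f_{\btheta^*}$ against its cover element) and ultimately uniformly over the cover, I would deliberately forgo martingale cancellation and instead bound $\big|2\sum_i (u_i - v_i)\eta_i\big| \le 2\alpha\sum_i |\eta_i|$ using only $|u_i - v_i| \le \alpha$. It then remains to control $\sum_i |\eta_i|$ simultaneously for all rounds. Setting $s_i := \sqrt{2\log(4i^2/\delta)}$ and using the $1$-sub-Gaussian tail $\PP(|\eta_i| > s_i) \le 2e^{-s_i^2/2} = \delta/(2i^2)$, a union bound yields $\PP(\exists i:\ |\eta_i| > s_i) \le \tfrac{\delta}{2}\sum_{i\ge1} i^{-2} < \delta$. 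On the complementary event, since $s_i$ is non-decreasing, every $i \le t$ obeys $|\eta_i| \le s_t$, so $\sum_{i \le t}|\eta_i| \le t\,s_t$ and hence $2\alpha\sum_i|\eta_i| \le \alpha t\sqrt{8\log(4t^2/\delta)}$ for all $t$ at once; combining with the deterministic bound closes the argument.

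The main obstacle is exactly this uniformity. The subset $\cC^l_k$ is produced by the level-assignment rule and is therefore random, and the candidate function is data-dependent, so neither the indices nor the weights $(u_i - v_i)$ may be regarded as fixed; this is what forces the crude $\sum_i|\eta_i|$ bound and the resulting linear-in-$t$ factor. The delicate point is calibrating the tail threshold per index as $s_i \propto \sqrt{\log(i/\delta)}$: a naive per-round union bound would contribute $\sum_t t\cdot\delta/(4t^2) \sim \sum_t 1/t$ and diverge, whereas the per-index calibration makes the total failure probability summable through $\sum_i i^{-2}$ and is precisely what produces the $4t^2$ inside the logarithm.
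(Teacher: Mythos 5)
Your algebraic decomposition is correct: the quantity inside the absolute value does collapse to $\tfrac12\sum_i(u_i-v_i)(u_i+v_i)-2\sum_i(u_i-v_i)\eta_i$, the deterministic piece is bounded by $\alpha t$ exactly as you say, and your reading of the hypothesis as $\|f-f^{\alpha}\|_{\infty}\le\alpha$ is the intended one (the $\ge$ in the statement is a typo). The paper itself offers no proof of this lemma --- it imports it from \citet{russo2013eluder} --- so your attempt to make it self-contained is welcome, but it has a genuine gap in the noise term, and it sits precisely at the point where this paper's setting departs from the original.

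The issue is that you calibrate the tail thresholds by the \emph{global round index} $i$, while the sum runs over the random level set $\cC^l_k=\{k_1<\dots<k_t\}$, whose elements' global indices can be arbitrarily larger than $t=|\cC^l_k|$. On your good event you only get $|\eta_{k_j}|\le s_{k_j}$, so $\sum_{j\le t}|\eta_{k_j}|\le t\,s_{k_t}=t\sqrt{2\log(4k_t^2/\delta)}$, which yields $\alpha t\sqrt{8\log(4k_{\max}^2/\delta)}$ rather than the claimed $\alpha t\sqrt{8\log(4t^2/\delta)}$. In the original Russo--Van Roy setting the sum is over rounds $1,\dots,t$ so position and global index coincide and your argument is exactly theirs; here the distinction is not cosmetic. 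The downstream contradiction argument in Lemma~\ref{lemma:bounded-set} requires $\beta^l_t$ to be a function of the cardinality $t$ alone --- if the logarithm grew with the global round, $U_l$ could not serve as a fixed bound on $|\cC^l|$. The fix is to index the union bound by the \emph{position $j$ within the level set}: the event that round $k$ is the $j$-th element added to $\cC^l$ is determined by $\cH_{k-1}$ and $\xb_k$ (the level assignment uses only $w_{\cF^l}(\xb_k)$, computed before $R_k$ is revealed), so the $j$-th-added noise $\eta_{k_{l,j}}=\sum_k\ind\{k=k_{l,j}\}\eta_k$ is $1$-sub-Gaussian by an optional-stopping/supermartingale argument, and one may then take the threshold $\sqrt{2\log(4j^2/\delta)}$ per position. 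This is exactly the martingale structure you announced you would ``deliberately forgo''; some form of it is unavoidable here, because the crude $\sum_i|\eta_i|$ route cannot see the difference between the cardinality of $\cC^l_k$ and the magnitude of its largest element.
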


\begin{proof}[Proof of Lemma~\ref{lemma:confidence-set}]

Now, we consider all those rounds added to the level set $\cC^l$ (with $l$ fixed), and in the end will use a union bound to prove the results for all level.

First we transform our problem to apply the general martingale result. We set $\cH_{k-1}$ to be the $\sigma$-algebra generated by $H_k = (\cA_1, \xb_1, R_1, \dots, \cA_{k-1}, \xb_{k-1}, R_{k-1}, \cA_{k})$ and $\xb_k$. By previous assumptions, $\epsilon_k := R_k - f_{\btheta^*}(\xb_k)$ satisfies $\EE[ \epsilon_k | \cH_{k-1} ] = 0$ and $\EE[ \exp(\lambda \epsilon_k) | \cH_{k-1}] \le \exp ( \lambda^2 / 2)$ since $\epsilon_k$ is $1$-sub-Gaussian.

Define 
\begin{align*}
    Z_k 
    : & =
    \ind \{ 2^{-l} < w_{\cF^l_k}(\xb_k) \le 2^{-(l-1)} \}
    \big[ \big( f_{\btheta^*}(\xb_k) - R_k \big )^2
    -
    \big( f(\xb_k) - R_k \big)^2
    \big]
    \\
    & =
    \ind \{ l_k = l \}
    \big[ 
    - \big( f(\xb_k) - f_{\btheta^*}(\xb_k) \big)^2
    +
    2 \big( f(\xb_k) - f_{\btheta^*}(\xb_k) \big) \epsilon_k
    \big],
\end{align*}
and we have
\begin{align*}
    \mu_k 
    & 
    =
    \EE [Z_k | \cH_{k-1}]
    =
    -\ind \{ l_k = l \}
    \big( f(\xb_k) - f_{\btheta^*}(\xb_k) \big)^2,
    \\
    \psi_{k}(\lambda)
    & = 
    \log 
    \EE 
    \big[ \exp \big \{ 2 \lambda [ f(\xb_k) - f_{\btheta^*}(\xb_k) | \cH_{k-1}] \epsilon_k \big \}   
    \big]
    \le 
    \frac{ \ind \{ l_k = l \} \big( 2 \lambda [ f(\xb_k) - f_{\btheta^*}(\xb_k) ] \big)^2}{ 2}.
\end{align*}
Here, $\ind \{ 2^{-l} < w_{\cF^l_k}(\xb_k) \le 2^{-(l-1)} \} = \ind \{ l_k = l \}$ because it is just a notation change. Note that this indicator function is deterministic on $\cH_{k-1}$. 
Applying Lemma~\ref{lemma:lemma4russo} and set $x = \log \delta^{-1}$ and $\lambda = 1/4$, we have
\begin{align*}
    \PP 
    \bigg( 
    \sum_{k=1}^{K}
    Z_k 
    \le 
    4 \log \delta^{-1} 
    -
    \frac{1}{2}
    \sum_{k=1}^{K}
    \ind \{ l_k = l \}
    \big( f(\xb_k) - f_{\btheta^*}(\xb_k) \big)^2
    , \forall K \in \NN
    \bigg) 
    \ge 1 - \delta .
\end{align*}
Rearranging terms (note $\cL_{\cC^{l}_K}( f, f_{\btheta^*}) = \sum_{k \in \cC^l_K} \big( f(\xb_k) - f_{\btheta^*}(\xb_k) \big)^2$),
\begin{align*}
    \PP 
    \bigg( 
    \sum_{k \in \cC^l_K} \big( f(\xb_k) - R_k \big)^2
    \ge 
    \sum_{k \in \cC^l_K} \big( f_{\btheta^*}(\xb_k) - R_k \big)^2
    -
    4 \log \delta^{-1} 
    +
    \frac{1}{2}
    \cL_{\cC^{l}_K}( f, f_{\btheta^*})
    , \forall K \in \NN
    \bigg) 
    \ge 1 - \delta .
\end{align*}
Let $\cF^{\alpha}$ be a $\alpha$-covering of $\cF$, by a union bound we have with probability $1-\delta$, 
\begin{align*}
    \sum_{k \in \cC^l_K} \big( f^{\alpha}(\xb_k) - R_k \big)^2
    \ge 
    \sum_{k \in \cC^l_K} \big( f_{\btheta^*}(\xb_k) - R_k \big)^2
    -
    4 \log ( |\cF^{\alpha}| / \delta )
    +
    \frac{1}{2}
    \cL_{\cC^{l}_K}( f^{\alpha}, f_{\btheta^*})
    , \forall K \in \NN, \forall f^{\alpha} \in \cF^{\alpha}.
\end{align*}
The inequality above, combined with Lemma~\ref{lemma:lemma5russo}, gives that with probability at least $1-2 \delta$,
\begin{align*}
    \sum_{k \in \cC^l_K} \big( f(\xb_k) - R_k \big)^2
    +
    \alpha t 
    \big[ 
    8
    +
    \sqrt{8 \log (4 t^2 / \delta)}
    \big]
    \ge 
    \sum_{k \in \cC^l_K} \big( f_{\btheta^*}(\xb_k) - R_k \big)^2
    -
    4 \log ( |\cF^{\alpha}| / \delta )
    +
    \frac{1}{2}
    \cL_{\cC^{l}_K}( f, f_{\btheta^*})
    , \forall K \in \NN, \forall f \in \cF.
\end{align*}
By setting $f = \hat{f}_{\cC^{l}_{K}}$, which minimize $\sum_{k \in \cC^l_K} \big( f(\xb_k) - R_k \big)^2$, and rearranging terms, we have with probability $1 - 2\delta_{l}$
\begin{align*}
    \sum_{k \in \cC^l_K}
    \cL_{\cC^{l}_K}( \hat{f}_{\cC^{l}_{K}}, f_{\btheta^*})
    & \le 
    8 \log ( \cN(\cF, \alpha, \|\cdot\|_{\infty}) / \delta_l )
    +
    2 \alpha t 
    \big[ 
    8
    +
    \sqrt{8 \log (4 t^2 / \delta_l)}
    \big].
\end{align*}
Now by setting $\delta_l = \delta 2^{-l}$, and applying union bound, we have with probability $1-2 \delta$,
\begin{align*}
    \cL_{\cC^{l}_K}( \hat{f}_{\cC^{l}_{K}}, f_{\btheta^*})
    & \le 
    \beta^l_{|\cC^l_K|},
    \forall K \in \NN, \forall l \in \NN.
\end{align*}

\end{proof}

\section{Proof of Lemmas in Section \ref{sec:proof-MDP}}
\subsection{Proof of Lemma \ref{lemma:bounded-set-MDP}} \label{subsec:proof-bounded-set-MDP}
\begin{proof}
Note that 
\begin{align*}
    \beta^{l}_t
    & := 
    2 H^2 \log(\cN(\cF, \alpha, \|\cdot\|_{\infty}) / \delta) 
    +
    2 \alpha t( 8 H + \sqrt{2 H^2 \log(4t^2/  2^{-l} \delta)})
    \\
    & =
    2 H^2 \log(\cN(\cF, U_{l}^{-1}, \|\cdot\|_{\infty}) / \delta) 
    +
    2 \frac{t}{U_l} ( 8 H + \sqrt{2 H^2 \log(4t^2/  2^{-l} \delta)})
    \\
    & \le 
    2 H^2 d_K \log( U_{l} / \delta) 
    +
    2 \frac{t}{U_l} ( 8 H + \sqrt{2 H^2 \log(4t^2/  2^{-l} \delta)}).
\end{align*}
Denoting $T=|\cC^l_{k,h}|$ at any fixed time step $(k,h)$, by Lemma~\ref{lemma:prop3-russo},  we know that (set $\epsilon = H 2^{-l}$)
\begin{align*}
    T 
    &\le 
    2\mathrm{dim}_{E}(\cF, H 2^{-l} ) \beta^l_T
    4^l H^{-2}.
\end{align*}

Now, we prove by contradiction. If at some point $T = U_l$, this means
\begin{align*}
    U_l 
    &\le 
    2\mathrm{dim}_{E}(\cF, H 2^{-l} ) \beta^l_{U_l}
    4^l
    \\
    & \le 
    2 d_{E} 4^l H^{-2}
    \bigg(
    2 H^2 d_K \log( U_{l} / \delta) 
    +
    2 ( 8 H + \sqrt{2 H^2 \log(4U_l^2/  2^{-l} \delta)})
    \bigg) 
    \\
    & < 
    2 d_{E} 4^l H^{-2}
    \bigg(
    2 H^2 d_K \log( U_{l} / \delta) 
    +
    2 ( 8 H + \sqrt{2 H^2 \log(4U_l^2/  2^{-l} \delta)})
    \bigg), 
\end{align*}
and meanwhile
\begin{align*}
    U_l
    & = 
    64 d_K d_{E} 4^l \log( U_{l} / \delta). 
\end{align*}
After some rearrangement we see this suggests
\begin{align*}
    15 d_K \log( U_{l} / \delta)
    & \le 
    8
    +
    \sqrt{2(2\log(U_l/\delta) + l + 2)},
\end{align*}
which cannot hold because $U_l/\delta > 64 d_K d_E 4^l / \delta  > 64 \cdot 4^l$.

The additional conclusion comes from the fact that $T = A \log T$ implies $T < 2A \log A$ if $A > e$.
\end{proof}

\bibliographystyle{ims}
\bibliography{reference}

\begin{thebibliography}{23}
\expandafter\ifx\csname natexlab\endcsname\relax\def\natexlab#1{#1}\fi
\expandafter\ifx\csname url\endcsname\relax
  \def\url#1{\texttt{#1}}\fi
\expandafter\ifx\csname urlprefix\endcsname\relax\def\urlprefix{URL }\fi

\bibitem[{Abbasi-Yadkori et~al.(2011)Abbasi-Yadkori, P{\'a}l and
  Szepesv{\'a}ri}]{abbasi2011improved}
\textsc{Abbasi-Yadkori, Y.}, \textsc{P{\'a}l, D.} and \textsc{Szepesv{\'a}ri,
  C.} (2011).
\newblock Improved algorithms for linear stochastic bandits.
\newblock In \textit{Advances in Neural Information Processing Systems}.

\bibitem[{Ayoub et~al.(2020)Ayoub, Jia, Szepesvari, Wang and
  Yang}]{ayoub2020model}
\textsc{Ayoub, A.}, \textsc{Jia, Z.}, \textsc{Szepesvari, C.}, \textsc{Wang,
  M.} and \textsc{Yang, L.~F.} (2020).
\newblock Model-based reinforcement learning with value-targeted regression.
\newblock \textit{arXiv preprint arXiv:2006.01107} .

\bibitem[{Chen et~al.(2022)Chen, Li, Yuan, Gu and Jordan}]{chen2022general}
\textsc{Chen, Z.}, \textsc{Li, C.~J.}, \textsc{Yuan, A.}, \textsc{Gu, Q.} and
  \textsc{Jordan, M.~I.} (2022).
\newblock A general framework for sample-efficient function approximation in
  reinforcement learning.
\newblock \textit{arXiv preprint arXiv:2209.15634} .

\bibitem[{Dani et~al.(2008)Dani, Hayes and Kakade}]{dani2008stochastic}
\textsc{Dani, V.}, \textsc{Hayes, T.~P.} and \textsc{Kakade, S.~M.} (2008).
\newblock Stochastic linear optimization under bandit feedback .

\bibitem[{Dann et~al.(2017)Dann, Lattimore and Brunskill}]{dann2017unifying}
\textsc{Dann, C.}, \textsc{Lattimore, T.} and \textsc{Brunskill, E.} (2017).
\newblock Unifying pac and regret: Uniform pac bounds for episodic
  reinforcement learning.
\newblock In \textit{Advances in Neural Information Processing Systems}.

\bibitem[{Dong et~al.(2020)Dong, Peng, Wang and Zhou}]{dong2020root}
\textsc{Dong, K.}, \textsc{Peng, J.}, \textsc{Wang, Y.} and \textsc{Zhou, Y.}
  (2020).
\newblock Root-n-regret for learning in markov decision processes with function
  approximation and low bellman rank.
\newblock In \textit{Conference on Learning Theory}. PMLR.

\bibitem[{Du et~al.(2021)Du, Kakade, Lee, Lovett, Mahajan, Sun and
  Wang}]{du2021bilinear}
\textsc{Du, S.~S.}, \textsc{Kakade, S.~M.}, \textsc{Lee, J.~D.},
  \textsc{Lovett, S.}, \textsc{Mahajan, G.}, \textsc{Sun, W.} and \textsc{Wang,
  R.} (2021).
\newblock Bilinear classes: A structural framework for provable generalization
  in rl.
\newblock \textit{arXiv preprint arXiv:2103.10897} .

\bibitem[{He et~al.(2021)He, Zhou and Gu}]{he2021uniform}
\textsc{He, J.}, \textsc{Zhou, D.} and \textsc{Gu, Q.} (2021).
\newblock Uniform-{PAC} bounds for reinforcement learning with linear function
  approximation.
\newblock In \textit{Thirty-Fifth Conference on Neural Information Processing
  Systems}.

\bibitem[{Jia et~al.(2020)Jia, Yang, Szepesvari and Wang}]{jia2020model}
\textsc{Jia, Z.}, \textsc{Yang, L.}, \textsc{Szepesvari, C.} and \textsc{Wang,
  M.} (2020).
\newblock Model-based reinforcement learning with value-targeted regression.
\newblock In \textit{Learning for Dynamics and Control}. PMLR.

\bibitem[{Jiang et~al.(2017)Jiang, Krishnamurthy, Agarwal, Langford and
  Schapire}]{jiang2017contextual}
\textsc{Jiang, N.}, \textsc{Krishnamurthy, A.}, \textsc{Agarwal, A.},
  \textsc{Langford, J.} and \textsc{Schapire, R.~E.} (2017).
\newblock Contextual decision processes with low bellman rank are
  pac-learnable.
\newblock In \textit{Proceedings of the 34th International Conference on
  Machine Learning-Volume 70}. JMLR. org.

\bibitem[{Jin et~al.(2021)Jin, Liu and Miryoosefi}]{jin2021bellman}
\textsc{Jin, C.}, \textsc{Liu, Q.} and \textsc{Miryoosefi, S.} (2021).
\newblock Bellman eluder dimension: New rich classes of rl problems, and
  sample-efficient algorithms.
\newblock \textit{arXiv preprint arXiv:2102.00815} .

\bibitem[{Jin et~al.(2019)Jin, Yang, Wang and Jordan}]{jin2019provably}
\textsc{Jin, C.}, \textsc{Yang, Z.}, \textsc{Wang, Z.} and \textsc{Jordan,
  M.~I.} (2019).
\newblock Provably efficient reinforcement learning with linear function
  approximation.
\newblock \textit{arXiv preprint arXiv:1907.05388} .

\bibitem[{Lattimore and Szepesv{\'a}ri(2018)}]{lattimore2018bandit}
\textsc{Lattimore, T.} and \textsc{Szepesv{\'a}ri, C.} (2018).
\newblock Bandit algorithms.
\newblock \textit{preprint}  28.

\bibitem[{Modi et~al.(2020)Modi, Jiang, Tewari and Singh}]{modi2020sample}
\textsc{Modi, A.}, \textsc{Jiang, N.}, \textsc{Tewari, A.} and \textsc{Singh,
  S.} (2020).
\newblock Sample complexity of reinforcement learning using linearly combined
  model ensembles.
\newblock In \textit{International Conference on Artificial Intelligence and
  Statistics}. PMLR.

\bibitem[{Osband and Van~Roy(2014)}]{osband2014model}
\textsc{Osband, I.} and \textsc{Van~Roy, B.} (2014).
\newblock Model-based reinforcement learning and the eluder dimension.
\newblock \textit{Advances in Neural Information Processing Systems}
  \textbf{27} 1466--1474.

\bibitem[{Russo and Van~Roy(2013)}]{russo2013eluder}
\textsc{Russo, D.} and \textsc{Van~Roy, B.} (2013).
\newblock Eluder dimension and the sample complexity of optimistic exploration.
\newblock In \textit{NIPS}. Citeseer.

\bibitem[{Sun et~al.(2019)Sun, Jiang, Krishnamurthy, Agarwal and
  Langford}]{sun2019model}
\textsc{Sun, W.}, \textsc{Jiang, N.}, \textsc{Krishnamurthy, A.},
  \textsc{Agarwal, A.} and \textsc{Langford, J.} (2019).
\newblock Model-based rl in contextual decision processes: Pac bounds and
  exponential improvements over model-free approaches.
\newblock In \textit{Conference on learning theory}. PMLR.

\bibitem[{Wagenmaker et~al.(2022)Wagenmaker, Chen, Simchowitz, Du and
  Jamieson}]{wagenmaker2022reward}
\textsc{Wagenmaker, A.}, \textsc{Chen, Y.}, \textsc{Simchowitz, M.},
  \textsc{Du, S.~S.} and \textsc{Jamieson, K.} (2022).
\newblock Reward-free rl is no harder than reward-aware rl in linear markov
  decision processes.
\newblock \textit{arXiv preprint arXiv:2201.11206} .

\bibitem[{Wang et~al.(2020)Wang, Salakhutdinov and
  Yang}]{wang2020reinforcement}
\textsc{Wang, R.}, \textsc{Salakhutdinov, R.~R.} and \textsc{Yang, L.} (2020).
\newblock Reinforcement learning with general value function approximation:
  Provably efficient approach via bounded eluder dimension.
\newblock \textit{Advances in Neural Information Processing Systems}
  \textbf{33}.

\bibitem[{Yang et~al.(2020)Yang, Jin, Wang, Wang and Jordan}]{yang2020function}
\textsc{Yang, Z.}, \textsc{Jin, C.}, \textsc{Wang, Z.}, \textsc{Wang, M.} and
  \textsc{Jordan, M.~I.} (2020).
\newblock On function approximation in reinforcement learning: Optimism in the
  face of large state spaces.
\newblock \textit{arXiv preprint arXiv:2011.04622} .

\bibitem[{Zanette et~al.(2020)Zanette, Lazaric, Kochenderfer and
  Brunskill}]{zanette2020learning}
\textsc{Zanette, A.}, \textsc{Lazaric, A.}, \textsc{Kochenderfer, M.} and
  \textsc{Brunskill, E.} (2020).
\newblock Learning near optimal policies with low inherent bellman error.
\newblock \textit{arXiv preprint arXiv:2003.00153} .

\bibitem[{Zhou et~al.(2021{\natexlab{a}})Zhou, Gu and
  Szepesvari}]{zhou2020nearly}
\textsc{Zhou, D.}, \textsc{Gu, Q.} and \textsc{Szepesvari, C.}
  (2021{\natexlab{a}}).
\newblock Nearly minimax optimal reinforcement learning for linear mixture
  markov decision processes.
\newblock In \textit{Conference on Learning Theory}. PMLR.

\bibitem[{Zhou et~al.(2021{\natexlab{b}})Zhou, He and Gu}]{zhou2020provably}
\textsc{Zhou, D.}, \textsc{He, J.} and \textsc{Gu, Q.} (2021{\natexlab{b}}).
\newblock Provably efficient reinforcement learning for discounted mdps with
  feature mapping.
\newblock In \textit{International Conference on Machine Learning}. PMLR.

\end{thebibliography}

\end{document}